\newcommand{\trans}{\mathsf{T}}
\newtheorem{theorem1}{Theorem}
\newtheorem{theorem}{Theorem}
\newtheorem*{theoremS*}{Theorem (Sinkhorn)}
\newtheorem*{theoremB*}{Theorem (Birkhoff)}
\newtheorem{mydef}{Definition}
\newtheorem{lemma}{Lemma}
\DeclareMathOperator*{\argmax}{arg\,max}
\title{Learning Latent Permutations with Gumbel-Sinkhorn Networks}
\author{
Gonzalo E. Mena \thanks{Work done while the author was at Google Brain.} \\
Department of Statistics, \\ Columbia University\\
\texttt{gem2131@columbia.edu}  \And David Belanger \\ Google Brain\\   \And  Scott Linderman   \\ Department of Statistics,\\ Columbia University \And Jasper Snoek \\ Google Brain
}
\DeclarePairedDelimiterX{\infdivx}[2]{(}{)}{%
  #1\;\delimsize\|\;#2%
}
\DeclarePairedDelimiterX{\infdivxx}[2]{(}{)}{%
  #1\;\delimsize\|\;#2%
}
\newcommand{\infdiv}{KL\infdivx}
\newcommand*\rot{\rotatebox{90}}
\begin{document}
\maketitle
\begin{abstract}

Permutations and matchings are core building blocks in a variety of latent variable models, as they allow us to align, canonicalize, and sort data. Learning in such models is difficult, however, because exact marginalization over these combinatorial objects is intractable. In response, this paper introduces a collection of new methods for end-to-end learning in such models that approximate discrete maximum-weight matching using the continuous Sinkhorn operator.  Sinkhorn operator is attractive because it functions as a simple, easy-to-implement analog of the softmax operator. With this, we can define the Gumbel-Sinkhorn method, an extension of the Gumbel-Softmax method~\citep{Jang2016,Maddison2016} to distributions over latent matchings. We demonstrate the effectiveness of our method by outperforming competitive baselines on a range of qualitatively different tasks: sorting numbers, solving jigsaw puzzles, and identifying neural signals in worms. 
\end{abstract}

\section{Introduction}


In principle, deep networks can learn arbitrarily sophisticated mappings from inputs to outputs. However, in practice we must encode specific inductive biases in order to learn accurate models from limit data. In a variety of recent research efforts, practitioners have provided models with the ability to explicitly manipulate latent combinatorial objects such as stacks~\citep{dyer2015transition,NIPS2015_5857}, memory slots~\citep{graves2014neural, sukhbaatar2015end}, mathematical expressions~\citep{neelakantan2015neural}, program traces~\citep{gaunt2016terpret,pmlr-v70-bosnjak17a}, and first order logic~\citep{rocktaschel2017end}.  Operations on these discrete objects can be approximated using differentiable operations on continuous relaxations of the objects. As such, these operations can be included as modules in neural network models that can be trained end-to-end by gradient descent. 


Matchings and permutations are a fundamental building block in a variety of applications, as they can be used to align, canonicalize, and sort data. Prior work has developed learning algorithms for supervised learning where the training data includes annotated matchings~\citep{caetano2009learning,petterson2009exponential,tang2015bethe}. However, we would like to learn models with latent matchings, where the matching is not provided to us as supervision. This is a common and relevant setting. For example,  ~\cite{Linderman2017}  showed a problem from neuroscience involving the identification of neurons from the worm C. elegans can be cast as the inference of latent permutation on a larger hierarchical structure. 

Unfortunately, maximizing the marginal likelihood for problems with latent matchings is very challenging. Unlike for problems with categorical latent variables, we cannot obtain unbiased stochastic gradients of the marginal likelihood using the score function estimator~\citep{williams1992simple}, as computing the probability of a given matching requires computing an intractable partition function for a structured distribution. Instead, we draw on recent work that obtains biased stochastic gradients by relaxing the discrete latent variables into continuous random variables that support the reparametrization trick~\citep{Jang2016,Maddison2016}. 

Our contributions are the following: first, in Section \ref{sec:sinkhorntheo} we present a theoretical result showing that the non-differentiable parameterization of a permutation can be approximated in terms of a differentiable relaxation, the so-called \textit{Sinkhorn operator}. Based on this result, in Section \ref{sec:sinkhornnetworks} we introduce \emph{Sinkhorn networks}, which generalize the work of method of~\cite{Adams2011} for predicting rankings, and complements the concurrent work by \cite{Cruz2017}, by focusing on more fundamental aspects. Further, in Section \ref{sec:gumbelsinkhorn} we introduce the  \textit{Gumbel-Sinkhorn}, an analog of the Gumbel Softmax distribution~\citep{Jang2016,Maddison2016} for permutations. This enables  optimization of the marginal likelihood by the reparametrization trick. Finally, in Section \ref{sec:experiments} we demonstrate that our methods outperform strong neural network baselines on the tasks of sorting numbers, solving jigsaw puzzles, and identifying neural signals from C. elegans worms.

\section{The Sinkhorn operator: an analog of the softmax for permutations}
\label{sec:sinkhorntheo}
One sensible way to approximate a discrete category by continuous values is by using a temperature-dependent softmax function, component-wise defined as  $\text{softmax}_\tau(x)_i=\exp(x_i/\tau)/\sum_{j=1}\exp(x_j/\tau)$. For positive values of $\tau$, $\text{softmax}_\tau(x)_i$ is a point in the probability simplex. Also, in the limit $\tau\rightarrow 0$, $\text{softmax}_\tau(x)_i$ converges to a vertex of the simplex, a one-hot vector corresponding to the largest $x_i$~\footnote{With the exception of the degenerate case of ties.}.
This approximation is a key ingredient in the successful implementations by~\cite{Jang2016,Maddison2016}, and here we extend it to permutations.

To do so, we first state an analog of the normalization implemented by the softmax. This is achieved through the Sinkhorn operator (or Sinkhorn normalization, or Sinkhorn balancing), which iteratively normalizes rows and columns of a matrix. Specifically, following~\cite{Adams2011}, we define the Sinkhorn operator $S(X)$ over an $N$ dimensional square matrix $X$ as:
\begin{eqnarray}
\label{eq:sinkop} S^0(X) &=& \exp(X), \\
\nonumber S^l(X) & = & \mathcal{T}_c\left(\mathcal{T}_r(S^{l-1}(X))\right),\\
\nonumber S(X) &= &\lim_{l\rightarrow \infty} S^l(X). 
\end{eqnarray}
where $\mathcal{T}_r(X)= X \oslash (X \mathbf{1}_N\mathbf{1}_N^\top),$ and  $ \mathcal{T}_c(X)= X   \oslash (\mathbf{1}_N\mathbf{1}_N^\top X $) as the row and column-wise normalization operators of a matrix, with $\oslash$ denoting the element-wise division and $\mathbf{1}_N$ a column vector of ones.  \cite{Sinkhorn1964} proved that $S(X)$ must belong to the Birkhoff polytope, the set of doubly stochastic matrices, that we denote $\mathcal{B}_N$~\footnote{This theorem requires certain technical conditions which are trivially satisfied if $X$ has positive entries, motivating the use of the component-wise exponential $\exp(\cdot)$ in the first line of equation \ref{eq:sinkop}.}.

Building on our analogy with categories, notice that choosing a category can always be cast as a maximization problem: the choice $\argmax_i x_i$ is the one that maximizes the function $\langle x,v\rangle$ (with $v$ being a one-hot vector), i.e.\ the maximizing $v^*$  indexes the largest $x_i$. Similarly, one may parameterize the choice of a permutation $P$ through a square matrix $X$, as the solution to the linear assignment problem \citep{Kuhn1955}, with $\mathcal{P}_N$ denoting the set of permutation matrices and $\left< A,B\right>_F=\mathrm{trace} (A^\top B)$  the (Frobenius) inner product of matrices:
\begin{equation} \label{eq:assign} M(X)=\argmax_{P\in \mathcal{P}_N}\left< P,X\right>_F.\end{equation}

We call $M(\cdot)$ the matching operator, through which we parameterize the hard choice of a permutation (see Figure \ref{fig:1}a for an example). Our theoretical contribution is to show that $M(X)$ can be obtained as the limit of $S(X/\tau)$, meaning that one can approximate $M(X)\approx S(X/\tau)$ with a small $\tau$. 
Theorem 1 summarizes our finding. We provide a rigorous proof in appendix \ref{sec:proofs}; briefly, it is based on showing that $S(X/\tau)$ solves a certain entropy-regularized problem in $\mathcal{B}_n$, which in the limit converges to the matching problem in equation \ref{eq:assign}. 
\begin{theorem1}
For a doubly-stochastic matrix $P$, define its entropy as $h(P)=-\sum_{i,j}P_{i,j}\log\left(P_{i,j}\right)$. Then, one has, \begin{equation}\label{eq:entropyreg}S(X/\tau)=\argmax_{P\in\mathcal{B}_N}\left< P,X\right>_F+\tau h(P).\end{equation}
Now, assume also the entries of $X$ are drawn independently from a distribution that is absolutely continuous with respect to the Lebesgue measure in $\mathbb{R}$. 
Then, almost surely, the following convergence holds: \begin{equation}\label{eq:convergence} M(X) = \lim_{\tau\rightarrow 0^+} S(X/\tau) .\end{equation}

\end{theorem1}
Finally, we note that Theorem 1 cannot be realized in practice, as it involves a limit on the Sinkhorn iterations $l$. Instead, we'll always consider the incomplete version of the Sinkhorn operator \citep{Adams2011}, where we truncate $l$ in \eqref{eq:sinkop} to $L$. Figure \ref{fig:1}b in appendix \ref{sub:illustration} illustrates the dependence of the approximation in $\tau$ and $L$. 
\section{Sinkhorn Networks}
\label{sec:sinkhornnetworks}

Now we show how to apply the approximation in Theorem 1 in the context of artificial neural networks. We construct a layer that encodes the representation of a permutation, and show how to train networks containing such layers as intermediate representations.

We define the components of this network through a minimal example: consider the supervised task of learning a mapping from scrambled objects $\tilde{X}$ to actual, non-scrambled $X$. Data, then, are $M$ pairs $(X_i,\tilde{X}_i)$ where $\tilde{X}_i$ can be constructed by randomly permuting pieces of $X_i$. We state this problem as a permutation-valued regression $X_i = P_{\theta, \tilde{X}_i}^{-1} \tilde{X}_i+\varepsilon_i$, where  $\varepsilon_i$ is a noise term, and $P_{\theta, \tilde{X}_i}$ is the permutation matrix mapping $X_i$ to $\tilde{X}_i$, which depends on $\tilde{X}_i$ and parameters $\theta$. We are concerned with minimization of the reconstruction error \footnote{This error arises from gaussian $\varepsilon_i$. Other choices may be possible, but here we stick to the most straightforward formulation}:
\begin{equation}
\label{eq:objective}
f(\theta, X, \tilde{X})= \sum_{i=1}^M || X_i- P_{\theta, \tilde{X}_i}^{-1} \tilde{X}_i ||^2.
\end{equation}

One way to express a complex parameterization of this kind is through a neural network: this network receives $\tilde{X}_i$ as input, which is then passed through some intermediate, feed-forward computations of the type $g_h(W_hx_h+b_h)$, where $g_h$ are nonlinear activation functions, $x_h$ is the output of a previous layer, and $\theta=\{(W_h,b_h)\}_h$ are the network parameters. To make the final network output be a permutation, we appeal to constructions developed in Section \ref{sec:sinkhorntheo}: by assuming that the final network output $P_{\theta, \tilde{X}}$ can be parameterized as the solution of the assignments problem; i.e., $P_{\theta, \tilde{X}}= M(g(\tilde{X},\theta))$, where $g(\cdot,\theta)$ represents the outcome of all operations involving $g_h$. 

Unfortunately, the above construction involves a non-differentiable $f$ (in $\theta$). We use Theorem 1 as a justification for replacing $M(g(\tilde{X},\theta))$ by the differentiable $S(g(\tilde{X},\theta)/\tau)$ in the computational graph. The value of $\tau$ must be chosen with caution: if $\tau$ is too small, gradients vanishes almost everywhere, as $S(g(\tilde{X},\theta)/\tau)$ approaches the non-differentiable $M(g(\tilde{X},\theta))$. Conversely, if $\tau$ is too large, $S(X/\tau)$ may be far from the vertices of the Birkhoff polytope, and reconstructions $P_{\theta, \tilde{X}}^{-1}\tilde{X}$ may be nonsensical (see Figure \ref{fig:figure3}a). Importantly, we will always add noise to the output layer $g(\tilde{X},\theta)$ as a regularization device: by doing so we ensure uniqueness of $M(g(\tilde{X},\theta))$, which is required for convergence in Theorem 1.

\subsection{Permutation equivariance}
Among all possible architectures that respect the aforementioned parameterization, we will only consider networks that are \textit{permutation equivariant}, the natural kind of symmetry arising in this context. Specifically, we require networks to satisfy:
$$P_{\theta,  P'\tilde{X} }\left(P'\tilde{X}\right) = P'\left(P_{\theta,\tilde{X}} \tilde{X}\right)$$
where $P'$ is an arbitrary permutation. The underlying intuition is simple: reconstructions of objects should not depend on how pieces were scrambled, but only on the pieces themselves.
We achieve permutation equivariance by using the same network to process each piece of $\tilde{X}$, throwing an $N$ dimensional output. Then, these $N$ outputs (each with $N$ components) are used to create the rows of the matrix $g(\tilde{X},\theta)$, to which we finally apply the (differentiable) Sinkhorn operator (i.e. $g$ stacks the composition of the $g_h$ acting locally on each piece). One can interpret each row as representing a vector of local likelihoods of assignment, but they might be inconsistent. The Sinkhorn operator, then, mixes those separate representations, and ensures that consistent (approximate) assignment are produced. 
With permutation equivariance, the only consideration left to the practitioner is the choice of the particular architecture, which will depend on the particular kind of data. In Section~\ref{sec:experiments} we illustrate the uses of Sinkhorn networks with three examples, each of them using a different architecture. Also, in figure \ref{fig:network} we illustrate a network architecture used in one of our examples.

\subsection{Summary}
Sinkhorn network is a supervised method for learning to reconstruct a scrambled object $\tilde{X}$ (input) given several training examples $(X_i,\tilde{X_i})$. 
By applying some non-linear transformations, a Sinkhorn network richly parameterizes the mapping between $\tilde{X}$ and the permutation $P$ that once applied to $\tilde{X}$, will allow to reconstruct the original object as $X_{rec}=P^{\top}\tilde{X}$ (the output). We note that Sinkhorn
networks may be similarly used not only to learn permutations, but also to learn matchings between
objects of two sets of the same size.

  \begin{figure}[h]
 \includegraphics[width=1.0\linewidth]{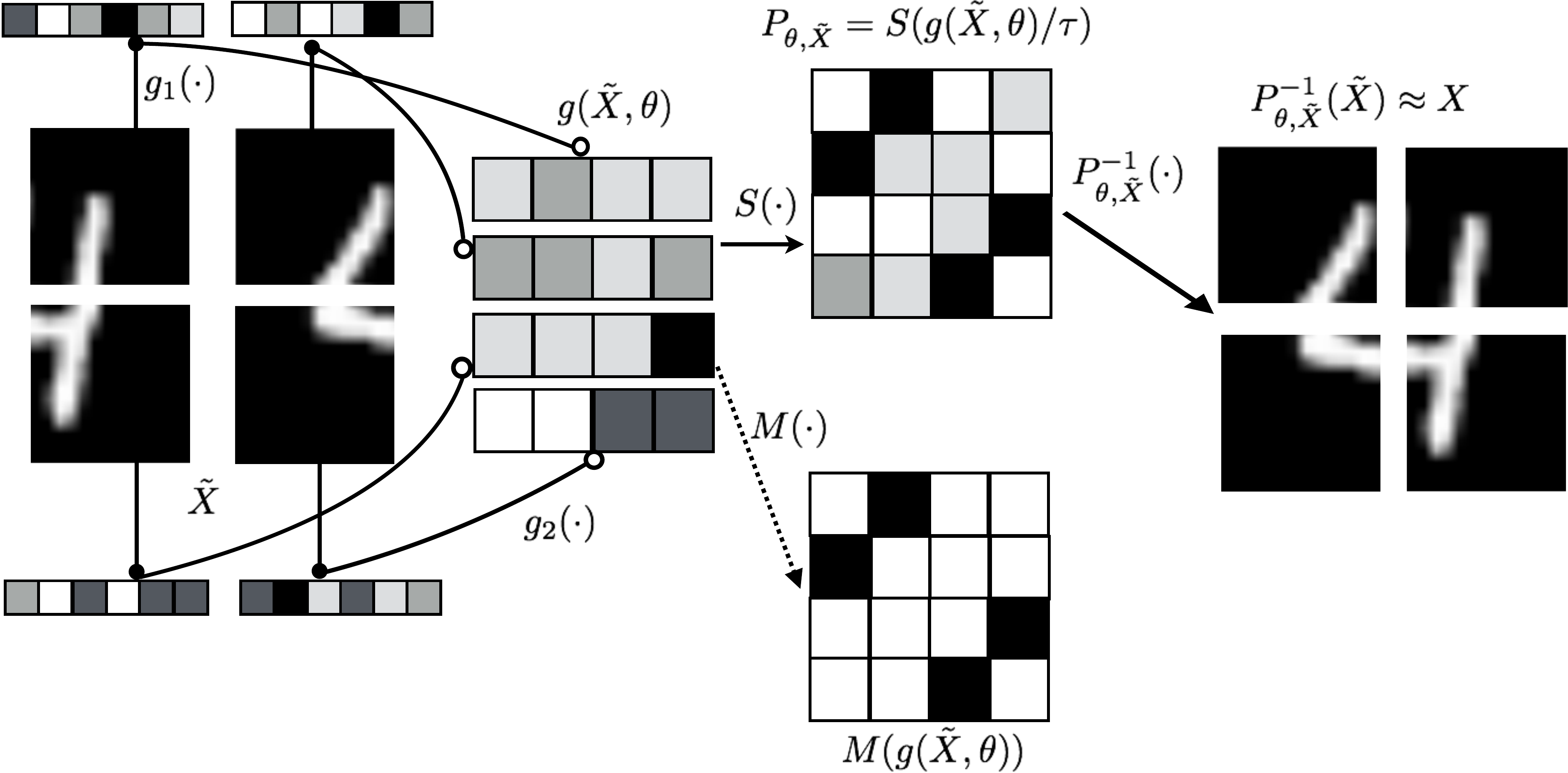}
 \caption{Schematic of Sinkhorn Network for Jigsaw puzzles. Each piece of the scrambled digit $\tilde{X}$ is processed with the same (convolutional) network $g_1$ (arrows with solid circles). The outputs lying on a latent space (rectangles surrounding $\tilde{X}$) are then connected through $g_2$ (arrows with empty circles) to conform the rows of the matrix $g(\tilde{X},\theta)$; $g(\tilde{X},\theta)_i=g_1\circ g_2(\tilde{X}_i)$.  Rows may be interpreted as unnormalized assignment probabilities, indicating individual unnormalized likelihoods of pieces of $\tilde{X}$ to be at every position in the actual image. Applying $S(\cdot)$ leads to a `soft-permutation'  $P_{\theta,\tilde{X}}$  that resolves inconsistencies in  $g(\tilde{X},\theta)$.  $P_{\theta,\tilde{X}}$ is then used to recover the actual $X$ at training, although at test time one may use the actual $M(g(\tilde{X},\theta))$.}
  \label{fig:network}
 \end{figure}

\section{Probabilistic aspects: the Gumbel-Sinkhorn and  Gumbel-Matching distributions}
\label{sec:gumbelsinkhorn}
Recently, in \cite{Jang2016} and \cite{Maddison2016}, the Gumbel-Softmax or Concrete distributions were defined for computational graphs with stochastic nodes; i.e, latent probabilistic representations. Their choice is guided by the following i) they seek re-parameterizable distributions to enable the re-parameterization trick  \citep{Kingma2013}, and note that via the \textit{Gumbel trick} (see below) any categorical distribution is re-parameterizable, ii) since the re-parameterization in i) is not differentiable, they consider instead sampling under the softmax approximation.  This gives rise to the Gumbel-Softmax distribution.

Here we parallel these choices to enable learning of a probabilistic latent representation of permutations. To this aim, we start by considering a generic distribution on the discrete set $\mathcal{Y}$, with potential function $X:\mathcal{Y}\rightarrow \mathbb{R}$:
\begin{equation}\label{eq:expfamily}p(y|X)\propto\exp\left(X(y)\right)\mathbf{1}_{y\in \mathcal{Y}}.\quad\end{equation}

Regarding i), the Gumbel trick arises in the context of Perturb and MAP methods \citep{Papandreou2011} for sampling in discrete graphical models. This has recently received renewed interest~\citep{Balog2017}, as it recasts the a difficult sampling problem as an easier optimization problem. In detail, sampling from \eqref{eq:expfamily}, can be achieved by the maximization of random perturbations of each potential $X(y)$, with Gumbel i.i.d. noise $\gamma(y)$; i.e., $ \argmax_{y\in \mathcal{Y}}\{X(y)+\gamma(y)\}\sim p(\cdot|X)$. Therefore, one can re-parameterize any categorical distribution (corresponding to \eqref{eq:expfamily} with $X(y)=\langle X, y\rangle$) by the choice of a category, after injecting noise.

However, the above scheme is unfeasible in our context, as $|\mathcal{Y}|=N!$. Nonetheless, we appeal to an interesting result: in cases where $\mathcal{Y}$ factorizes, $\mathcal{Y}=\prod_{i=1}^N\mathcal{Y}_i$ \footnote{It suffices that $\mathcal{Y}$ is a subset of the product space, which here is true as $\mathcal{Y}=\mathcal{P}_n\subseteq\{1,\ldots,N\}^N$.}, the use of rank-one perturbations $\gamma(y)=\sum_{i=1}^N \gamma_i(y_i)$ is proposed as a more tractable alternative. Although ultimately heuristic, they lead to bounds in the partition function \citep{Hazan2012,Balog2017}, and can also be understood as providing approximate or unbiased samples from the true density~\citep{Hazan2013,Tomczak2016}. 

Guided by this, we say the random permutation $P$ follows the \textit{Gumbel-Matching} distribution with parameter $X$, denoted $P\sim \mathcal{G.M.}(X)$, if it has the distribution arising by the rank-one perturbation of \eqref{eq:expfamily} on permutations, with the linear potential $X(P)=\left<X,P\right\rangle_F$ (replacing $y$ with $P$). One can verify, in a similar line as in \cite{Li2013}, that $M(X+\varepsilon)\sim \mathcal{G.M.}(X)$, if $\varepsilon$ is a matrix of standard i.i.d. Gumbel noise.

Unfortunately, as ii) with the categorical case, Gumbel-Matching distribution samples are not differentiable in $X$, but by appealing to Theorem 1, we define its relaxation for doubly stochastic matrices as follows: we say $P$ follows the \textit{Gumbel-Sinkhorn} distribution with parameter $X$ and temperature $\tau$ , denoted $P\sim \mathcal{G.S.}(X,\tau)$, if it has the distribution of $S((X+\varepsilon)/\tau)$. Samples of $\mathcal{G.S.}(X,\tau)$  converge almost surely to samples of the Gumbel-Matching distribution  (see Fig \ref{fig:1}c in appendix \ref{sub:illustration}).

Unlike for the categorical case, neither the Gumbel-Matching nor Gumbel-Sinkhorn distributions have tractable densities. However, this does not preclude inference: likelihood-free methods have recently been developed to enable learning in such implicitly defined distributions~\citep{Ranganath2016, Tran2017}. These methods avoid evaluating the likelihood based on the observation that in many cases inference can be cast as the estimation of a likelihood ratio, which can be obtained from samples~\citep{Huszar2017}. Regardless of these useful advances, in the following we develop a solution based on using the likelihoods of random variables whose densities \emph{are available}.

\subsection{Approximate Posterior Inference}
\label{sub:vi}
Consider a latent variable model probabilistic model with observed data $Y$, and latent $Z=\{P,W\}$ where $P$ is a permutation and $W$ are other variables. Here we illustrate how to approximate the posterior probability $p(\{P,W\}|Y)$ using variational inference \cite{Blei2017}. Specifically, we aim to maximize the ELBO, the r.h.s. of ~\eqref{eq:elbo}:
\begin{equation}\label{eq:elbo}\log p(y)\geq E_{q(Z|Y)}\left(\log p(Y|Z)\right) - \infdiv{q(Z|Y)}{p(Z)}.\end{equation}
We assume that both the prior and variational posteriors decompose as products (mean-field). That is, $q(\{P,W\}|Y)=q(P)q(W), p(P,W)=p(P)p(W)$. With this assumption, we may focus only on the discrete part of the problem, i.e. without loss of generality we can assume $Z=P$. 

We parameterize our variational prior and posteriors on $P$ using the Gumbel-Matching distributions with some parameter $X$; $\mathcal{G.M.}(X)$. To enable differentiability, we replace them by $\mathcal{G.S.}(X,\tau)$ distributions, leading to a surrogate ELBO that uses relaxed (continuous) variables. In more detail, for our uniform prior over permutations we use the isotropic $\mathcal{G.S.}(X=0,\tau_{prior})$ distribution, while for the variational posterior we consider the more generic $\mathcal{G.S.}(X,\tau)$.

Unfortunately, the term $\infdiv{q(P|Y)}{p(P)}=\infdiv{\mathcal{G.S.}(X,\tau)}{\mathcal{G.S.}(X=0,\tau_{prior})}$ in equation ~\eqref{eq:elbo} is intractable as there is not closed form expression for the density of $\mathcal{G.S.}$ random variables. As a solution, we use that our prior and posterior are re-parameterizable in terms of matrices $\varepsilon$ of Gumbel i.i.d variables: we have $S((X+\varepsilon)/\tau)\sim\mathcal{G.S.}(X,\tau)$ and $S(\varepsilon/\tau_{prior})\sim\mathcal{G.S.}(X=0,\tau_{prior})$, for the posterior and prior, respectively. To obtain a tractable expression, we propose to use as `code' or stochastic node $Z$, the variable $(X+\varepsilon)/\tau$ instead. Then, the KL term substantially simplifies to $\infdiv{(X+\varepsilon)/\tau}{\varepsilon/\tau_{prior}}$.  This term can be computed explicitly, as shown in appendix \ref{sub:implicit}.

This `trick', however, comes at a cost: the divergence $\infdiv{Z_1}{Z_2}$  would certainly remain unchanged by applying the same invertible transformation $g$ to both variables $Z_1$ and $Z_2$, but in the general case, for non-invertible transformations, such as $S(\cdot)$, one has $\infdiv{Z_1}{Z_2}\geq \infdiv{g(Z_1)}{g(Z_2)}$. This implies that working in the `Gumbel space' might entail the optimization of a less tight lower bound. Nonetheless, through categorical experiments on MNIST (see appendix \ref{sub:vaecategorical}) we observe this loss of tightness is minimal, suggesting the suitability of our approach on permutations. Finally, we note that key to to our treatment of the problem is the fact that both the prior and posterior were the same function ($S(\cdot)$) of a simpler distribution. This may not be the case in more general models.

To conclude this section, we refer the reader to table \ref{table:summary} in appendix \ref{sec:summary} for a summary of all the constructions on permutations developed in this work.

\section{Experiments}

 \label{sec:experiments}
In this section we perform several experiments comparing to existing methods. In the first three experiments we explore different Sinkhorn network architectures of increasing complexity, and therefore, they mostly implements section \ref{sec:sinkhornnetworks}. The fourth experiment relates to the probabilistic constructions described in section \ref{sec:gumbelsinkhorn}, and addresses a problem involving marginal inferences over a latent, unobserved permutation. All experimental details not stated here are in appendix \ref{sec:expdetails}.

\subsection{Sorting numbers}
\label{sub:sorting}

\begin{table}[t]
   \centering
  \begin{tabular}{llllllll}
    \multicolumn{1}{c}{Test distribution} & \multicolumn{1}{c}{$N=5$} & \multicolumn{1}{c}{$N=10$} & \multicolumn{1}{c}{$N=15$} & \multicolumn{1}{c}{$N=80$} & \multicolumn{1}{c}{$N=100$} & \multicolumn{1}{c}{$N=120$}\\
\cmidrule(lr){1-1}\cmidrule(lr){2-7}

    $U(0,1)$   &\textbf{.0} &\textbf{.0} & \textbf{.0} & \textbf{.0}   & \textbf{.0}  & \textbf{.01} \\
    $U(0,1)$~\citep{Vinyals2015}   &.06 & 0.43 & 0.9 & -   & -  & - \\
\cmidrule(lr){1-1}\cmidrule(lr){2-7}    
    $U(0,10)$    & .0  & .0  & .0 & .0  &.02 &  .03\\
     $U(0,1000)$  &.0  &.0   & .0    & .01  &.02 &   .04\\
    $U(1,2)$ &.0 &.0  & .0&  .01       &.04 &  .08\\
    $U(10,11)$ &.0  &.0 & .0& .08  &.08 & .6\\
    $U(100,101)$  &.0  &.0 & .01& .02      & .99& 1. \\
    $U(1000,1001)$ &.0  &.0  & .07  & 1.  & 1.& 1.\\
    \bottomrule
  \end{tabular}
   \caption{Results on the number sorting task measured using Prop. any wrong. In the top two rows we compare to~\cite{Vinyals2015}, showing that our approach can sort far more inputs at significantly higher accuracy. In the bottom rows we evaluate generalization to different intervals on the real line.}
  \label{table:tablesorting}

\end{table}

To illustrate the capabilities of Sinkhorn Networks in a simple scenario, we consider the task of sorting numbers using artificial neural networks as in~\cite{Vinyals2015}. Specifically, we sample uniform random numbers $\tilde{X}$ in the $[0,1]$ interval and we train our network with pairs $(\tilde{X},X)$ where $X$ are the same $\tilde{X}$ but in sorted order. The network has a first fully connected layer that links a number with an intermediate representation (with 32 units), and a second (also fully connected) layer that turns that representation into a row of the matrix $g(\tilde{X},\theta)$.

Table~\ref{table:tablesorting} shows our network learns to sort up to $N=120$ numbers. As an evaluation measure, we report the proportion of sequences where there was at least one error (Prop. any wrong). Surprisingly, the network learns to sort numbers even when test examples are not sampled from $U(0,1)$, but on a considerably different interval. This indicates the network is not overfitting. These results can be compared with those from~\cite{Vinyals2015}, where a much more complex (recurrent) network was used, but performance guarantees were obtained only with at most $N=15$ numbers. In that case, the reported error rate is 0.9, whereas ours starts to degrade only after $N\approx100$ for most test intervals.

\subsection{Jigsaw Puzzles}
\label{sub:puzzles}

\begin{table}[t]
  \centering
  \begin{tabular}{llllllllllll}
    \toprule
   &
     \multicolumn{5}{c}{MNIST} & \multicolumn{4}{c}{Celeba} & \multicolumn{2}{c}{Imagenet}\\
    \cmidrule(lr){2-6} \cmidrule(lr){7-10} \cmidrule(lr){11-12}
    & 2x2 & 3x3  & 4x4 & 5x5  & 6x6 & 2x2 & 3x3 & 4x4 & 5x5 & 2x2 & 3x3\\
    \midrule
    Kendall tau     & 1.       & .83 &.43 &.39&  .27 & 1.0 & .96 & .88 & .78 & .85&\textbf{.73}\\
  Kendall tau \\ \citep{Cruz2017}    & -       & - & - &-&  - & - & - & - & - & - & .72 \\
    \midrule
    Prop. wrong     & .0 & .09  &.45 &.45& .59  & .0 & .03 & .1 & .21 & .12 &.26\\
    Prop. any wrong   &  .0 & .28       & .97 &1. &1.&  .0 & .09 &  .36& .73  & .19 & .53\\
    $l1$     & .0      & .0 &.04 & .02&  .03& .0 & .01 & .04 & .08 & .05&.12\\
    $l2$     & .0       & .0 & .26 &.18 & .19&  .0& .11 & .18 &  .24 & .22 & .31\\
    \bottomrule
  \end{tabular}
     \caption{Jigsaw puzzle results. We compare to the available result on the Kendall Tau metric from~\cite{Cruz2017} and provide additional results from our experiments. Randomly guessed permutations of $n$ items have an expected proportion of errors of $(n-1)/n$. Note that our model has at least 20x fewer parameters.\label{table:puzzles}.}

\end{table}

A more complex scenario for learning permutations arises in the reconstruction of an image $X$ from a collection of scrambled ``jigsaw'' pieces $\tilde{X}$~\citep{Noroozi2016,Cruz2017}. In this example, our network differs from the one in~\ref{sub:sorting} in the first layer is a simple CNN (convolution + max pooling), which maps the puzzle pieces to an intermediate representation (see figure \ref{fig:network} for details). 

For evaluation on test data, we report several measures: first, in addition to Prop. any wrong we also consider Prop. wrong, the overall proportion of scrambled pieces that were wrongly assigned to their actual position. Also, we use $l1$ and $l2$ (train) losses and the Kendall tau, a ``correlation coefficient'' for ranked data.
In Table \ref{table:puzzles}, we benchmark results for the MNIST, Celeba and Imagenet datasets, with puzzles between 2x2 and 6x6 pieces. In MNIST we achieve very low $l1$ and $l2$ on up to 6x6 puzzles but a high proportion of errors. This is a consequence of our loss being agnostic to particular permutations, but only caring about reconstruction errors: as the number of black pieces increases with the number of puzzle pieces, many become unidentifiable under this loss.

In Celeba, we are able to solve puzzles of up to 5x5 pieces with only 21\% of pieces of faces being incorrectly ordered (see Figure \ref{fig:figure3}a for examples of reconstructions). For this dataset, we provide additional baselines in Table \ref{table:suppceleba} of appendix \ref{sub:puzzlessupp}: there, we show that performance substantially decreases if the temperature is too small or large, but only slightly decreases if only one Sinkhorn iterations is made. We observe that temperature does play a relevant role, consistent with the findings of~\cite{Maddison2016,Jang2016}. This might not be obvious a-priori, as one could reason that temperature over-parameterizes the network. However, results confirm this is not the case. We hypothesize that different temperatures result in parameter convergence in different phases or regions. Also, the minor difference for a single iteration suggest that only a few might be necessary, implying potential savings in the memory needed to unroll computations in the graph, during training.

Learning in the Imagenet dataset is much more challenging, as there isn't a sequential structure that generalizes among images, unlike Celeba and MNIST. In this dataset, our network ties with the .72 Kendall tau score reported in \citep{Cruz2017}. Their network, named DeepPermNet, is based on the stacking of up to the sixth fully connected layer \textit{fc6} of AlexNet \citep{Krizhevsky2012}, which finally (fully) connects to a Sinkhorn layer through intermediate \textit{fc7} and \textit{fc8}. We note, however, our network is much simpler, with only two layers and far fewer parameters. Specifically, the network that produced our best results had around 1,050,000 parameters (see appendix \ref{sec:expdetails} for a derivation), while in DeepPermNet, the layer connecting \textit{fc6} with \textit{fc7} has $512\times 4096\times 9\approx19,000,000$ parameters, let alone the AlexNet parameters (also to be learned). Indeed, we believe there is no reason to consider a complex stacking of convolutions: as the number of pieces increases, each piece is smaller and the convolutional layer eventually becomes fully connected. In the following experiment we explore this phenomenon in more detail.

\begin{figure}[t]
 \includegraphics[width=1\linewidth]{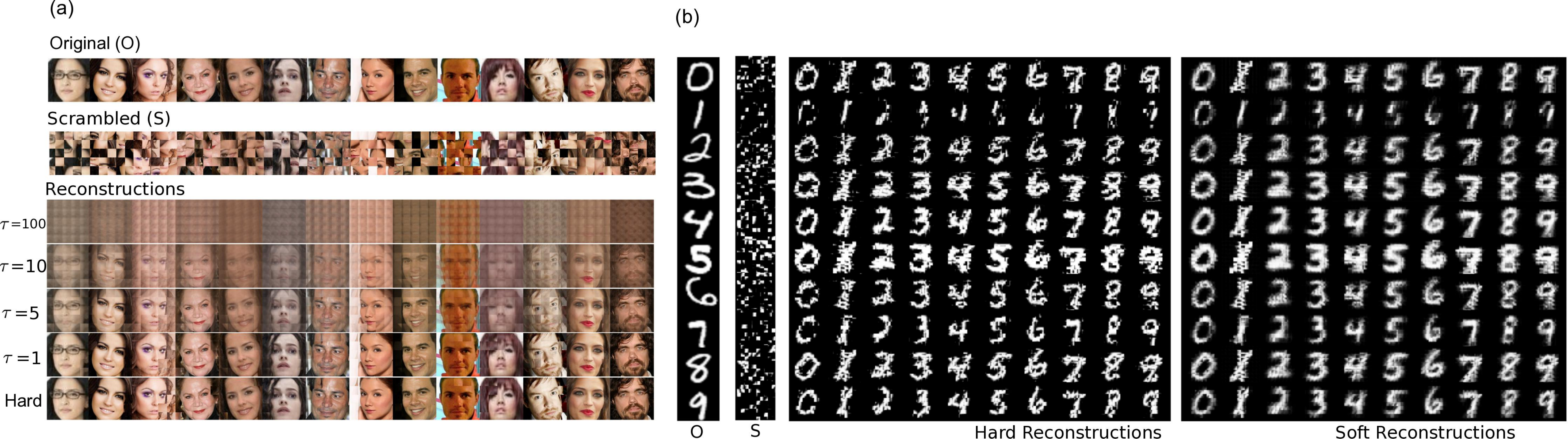}
 \caption{(a) Sinkhorn networks can be trained to solve Jigsaw Puzzles. Given a trained model, `soft' reconstructions are shown at different $\tau$ using $S(X/\tau)$. We also show hard reconstructions, made by computing $M(X)$ with the Hungarian algorithm \citep{Munkres1957}. (b) Sinkhorn networks can also be used to learn to transform any MNIST digit into another. We show hard and soft reconstructions, with $\tau=1$.}
 \label{fig:figure3}
\end{figure}

\subsection{Assembly of arbitrary MNIST digits from pieces}

We also consider an original application, motivated by the observation that the Jigsaw Puzzle task becomes ill-posed if a puzzle contains too many pieces. Indeed, consider the binarized MNIST dataset: there, reconstructions are not unique if pieces are sufficiently atomic, and in the limit case of pieces of size 1x1 squared pixels, for a given scrambled MNIST digit there are as many valid reconstructions as there are MNIST digits with the same number of white pixels. In other words, reconstructions stop being 
probabilistic and become a multimodal distribution over permutations.

We exploit this intuition to ask whether a neural network can be trained to achieve arbitrary digit reconstructions, given their loose atomic pieces. To address this question, we slightly changed the network in \ref{sub:puzzles}, this time stacking several second layers linking an intermediate representation to the output. We trained the network to reconstruct a particular digit with each layer, by using digit identity to indicate which layer should activate with a particular training example.

Our results demonstrate a positive answer: Figure \ref{fig:figure3}b shows reconstructions of arbitrary  digits given 10x10 scrambled pieces. In general, they can be unambiguously identified by the naked eye. Moreover, this judgement is supported by the assessment of a neural network. Specifically, we trained a two-layer CNN \footnote{Specifically, we used the one described in the \href{https://www.tensorflow.org/get_started/mnist/pros}{Deep MNIST for experts tutorial.}} on MNIST (achieving a 99.2\% accuracy on test set) and evaluated its performance on the test set generated by arbitrary transformations of each digit of the original test set into any other digit. We found the CNN made an appropriate judgement in 85.1\% of the time. More specific results, regarding  specific transformations are presented in Table \ref{table:assembly} of appendix \ref{sub:asssup}. 

Finally, we note that meaningful assemblies are possible regardless of the original digit: in Figure \ref{fig:afig1} of appendix \ref{sub:asssup} we show arbitrary reconstructions, by this same network, of ``digits'' from a `strongly mixed' MNIST dataset. In detail, these ``digits''  were crafted by sampling, without replacement, from a bag containing all the small pieces from all original digits. These reconstructions suggest the possibility of an alternative to generative modeling, based on the (random) assembly of small pieces of noise, instead of the processing of noise through a neural network. However, this would require training the network without supervision, which is beyond the scope of this work.

\subsection{Posterior inference over permutations with the Gumbel-Sinkhorn estimator}
\label{sub:celegans}
We illustrate how the $\mathcal{G.S.}$ distribution can be used as a continuous relaxation for stochastic nodes in a computational graph. To this end, we revisit the ``C. elegans neural identification problem'', originally introduced in~\cite{Linderman2017}. We refer the reader to~\citep{Linderman2017} for an in-depth introduction, but briefly, \textit{C. elegans} is a nematode (worm) whose biological neural configuration -- the \textit{connectome} -- is stereotypical; i.e. specimens always posses the same number of somatic neurons (282) \citep{Varshney2011}, and the ways those neurons connect and interact changes little from worm to worm. Therefore, its brain can be thought of as a canonical object, and its neurons can unequivocally be identified with names. 

The task, then, consists of matching traces from the observed neural dynamics $Y$ to identities (neuron names) in the canonical brain. This problem is stated in terms of a Bayesian hierarchical model, in order to profit from prior information that may constrain the possibilities. Specifically, one states a linear dynamical system $Y_t = PW P^\trans
  Y_{t-1}+\nu_t$, where  $\nu_t$ is a noise term and $W$ and $P$ are latent variables with respective prior distributions. $W$ encodes the dynamics, with a prior $p(W)$ to represent the sparseness of the connectome, etc., and $P$ is a permutation matrix representing the matching between indexes of observed neurons and their canonical counterparts, where we place a flat prior $p(P)$ over permutations. Notably, within the framework it is possible to model the simultaneous problem with many worms sharing the same dynamical system, but here we avoid explicit references to individuals for notational ease.
 
Given this model, we seek the posterior distribution $p(\{P,W\}|Y)$, a problem that we address with variational inference  \citep{Blei2017} using the constructions developed in \ref{sub:vi}. In Table \ref{table:celegans} (and also in Table \ref{table:celeganssup} of appendix \ref{sub:celeganssup}) we show results for this task, using accuracy in matching as the performance measure. These are broken down by relevant experimental covariates~\citep{Linderman2017}: different proportion of neurons known beforehand, and by task difficulty. As baselines, we include i) a simple MCMC sampler that proposes local swipes on permutations ii) the rounding method presented in \cite{Linderman2017}, iii) our method, where we also consider the absence of regularization. Results show our method outperforms the alternatives in most cases. MCMC fails because mixing is poor, but differences are much subtler with the other baselines. With them, we see that clear differences with the no-regularization case confirm the stochastic nature of this problem, i.e., that it is truly necessary to represent a latent probabilistic permutation. We believe our method outperforms the one in \cite{Linderman2017} because theirs, although it provides a explicit density, is a less tight relaxation, in the sense that points can be anywhere in the space, and not only on the Birkhoff polytope. Therefore, their prior also needs to be defined on the entire space and may not property act as an efficient regularizer.

\begin{table}[t]
   \centering
  \begin{tabular}{lllllllll}Prop. known neurons
    & \multicolumn{2}{c}{40.\%} & \multicolumn{2}{c}{30.\%} & \multicolumn{2}{c}{20.\%} & \multicolumn{2}{c}{10.\%}\\
    \cmidrule(lr){2-3} \cmidrule(lr){4-5} \cmidrule(lr){6-7} \cmidrule(lr){8-9}
    Difficulty & Easy& Hard  & Easy & Hard & Easy & Hard & Easy & Hard \\
    \midrule
    MCMC   & .85 & .82  &.51 &.44& .29 & .27 & .16 & .12 \\
    \citep{Linderman2017}  &   \textbf{.97} & .95 &  .90 &.\textbf{85} &  \textbf{.77} & \textbf{.59} & .39 & .21 \\
    Gumbel-Sinkhorn    & \textbf{.97} & \textbf{.96} & \textbf{.92} & .84 & .76&  .\textbf{59} & \textbf{.44} & \textbf{.26}\\
      \shortstack{Gumbel-Sinkhorn, no regularization} 
      & .96 & .93 & .89 & .78 &  .71 & .52 & .4 & .23 \\
      
    \bottomrule
  \end{tabular}
   \caption{Results for the C. elegans neural inference problem. }
   \label{table:celegans}
\end{table}

\section{Related work}
\label{sec:related}

Learning with matchings has been extensively been studied in the machine learning community; but current applications mostly relate to structured prediction \citep{petterson2009exponential,tang2015bethe}. However, our probabilistic treatment focuses on marginal inference in a model with a latent matching. This is a more challenging scenario, as standard learning techniques, i.e.\ the score function estimator or REINFORCE \citep{williams1992simple}, are not applicable due to the partition function for non-trivial distributions over matchings.

In the case of latent categories, a recent technique that combines a relaxation and the re-parameterization trick \citep{Kingma2013} was proposed as a competitive alternative to REINFORCE for the marginal inference scenario. Specifically, ~\cite{Maddison2016,Jang2016} use the Gumbel-trick to re-parameterize a discrete density, and then replace it with a relaxed surrogate, the Gumbel Softmax distribution, to enable gradient-descent. Our work, like the simultaneous work of \cite{Linderman2017}, aims to extends the scope of this technique to latent permutations. We deem our Gumbel Sinkhorn distributions as the most natural tractable extension of the Gumbel Softmax to permutations, as we clearly parallel each of the steps leading to its construction. A parallel is also presented in ~\cite{Linderman2017}; and notably, unlike ours, their framework produces tractable densities. However, it is less clear how their constructions extend each of the features of the Gumbel Softmax: for example, their rounding-based relaxation also utilizes the Sinkhorn operator, but the limit they consider does not make use of the non-trivial statement of Theorem 1, which naturally extends the categorical case (see appendix \ref{sub:relation} for details). In practice, we see our results favor the Gumbel Sinkhorn distribution, since it is a tighter relaxation. 

Connections between permutations and the Sinkhorn operator have been known for at least twenty years. Indeed, the limit in Theorem 1 was first presented in \cite{Kosowsky1994}, but their interpretation and motivation were more linked to statistical physics and economics. 
However, our approach is different and links to recent developments in optimal transport (OT) \citep{Villani2003}: Theorem 1 draws on the entropy-regularization for OT technique developed in\cite{Cuturi2013}, where  the entropy-regularized transportation problem is referred to as a `Sinkhorn distance'. The extension is sensible as in the case of transportation between two discrete measures (here) the Birkhoff polytope appears naturally as the optimization set \citep{Villani2003}. Entropy regularization as means to achieve a differentiable version of a loss was first proposed in \cite{Genevay2017} in the context of generative modeling. Although this field may appear separate, recent work  \citep{Salimans2018} makes explicit the connection to permutations: to compute a (Wasserstein) distance between a batch of dataset samples and one of generative samples of the same size, one needs to solve the matching problem so that the distance between matched samples is minimized. Finally, we note our work shares with \cite{Salimans2018,Genevay2017} in that the OT cost function (here, the matrix $X$) is learned using an artificial neural network.


We understand our work as extending~\cite{Adams2011}, which developed neural networks to learn a permutation-like structure; a ranking. However, there, as in \cite{Helmbold2009}, the objective function was linear and the Sinkhorn operator was instead used as an approximation of a matrix of the marginals, i.e., $S(P)\approx E(P)$. In consequence, there was no need to introduce a temperature parameter and consider a limit argument, which is critical to our case.  Interestingly, equation \eqref{eq:entropyreg} can be understood in terms of approximate marginal inference, justifying the approximation $S(P)\approx E(P)$. We comment on this in appendix \ref{sec:marginal}. Note that Sinkhorn iteration can be interpreted as mean-field inference in an associated Gibbs distribution over matchings. With this in mind, backpropagation through Sinkhorn is an end-to-end learning in an unrolled inference algorithm~\cite{stoyanov2011empirical, domke2013learning}. In future work, it may be fruitful to unroll alternative algorithms for marginal inference over matchings, such as belief propagation~\citep{Huang2009}.

Sinkhorn networks were also very recently introduced in \cite{Cruz2017}, although their work substantially differs from ours.  While their interest lies in the representational aspects of CNN's, we are more concerned with the more fundamental properties. In their work, they don't consider a temperature parameter $\tau$, but their network still successfully learns, as $\tau=1$ happens to fall within the range of reasonable values. On the Jigsaw puzzle task, we showed that we achieve equivalent performance with a much simpler network having several times fewer parameters and layers. Nonetheless, we recognize the need for more complex architectures for the tasks considered in~\cite{Cruz2017}, and we hope our more general theory; particularly, Theorem 1 and the notion of equivariance, may aid further developments in that direction.

\section{Discussion}

We have demonstrated Sinkhorn networks are able to learn to find the right permutation in the most elementary cases; where all training samples obey the same sequential structure; e.g., in sorted number and in pieces of faces, as we expect parts of faces occupy similar positions from sample to sample. This is already non-trivial, as indicates one can train a neural network to solve the linear assignment problem. 

However, the fact that Imagenet represented a much more challenging scenario indicates there are clear limits to our formulation. As the most obvious extension we propose to introduce a sequential stage, in which current solutions are kept on a memory buffer, and improved. One way to achieve this would be by exploring more complex parameterizations for permutations; i.e. replacing $M(X)$ by a quadratic operator that may parameterize a notion of local distance between pieces. Alternatively, one may resort to reinforcement learning techniques, as suggested in \cite{Bello2016}.
Either sequential improvement would help solve the ``Order Matters'' problem~\citep{Vinyals2015}, and we deem our elementary work as a significant step in that direction.

We have made available Tensorflow code for Gumbel-Sinkhorn networks featuring an implementation of the number sorting experiment at \href{http://github.com/google/gumbel_sinkhorn}{http://github.com/google/gumbel\_sinkhorn} .


\bibliographystyle{iclr2018_conference}
\bibliography{iclr2018_conference}

\begin{thebibliography}{56}
\providecommand{\natexlab}[1]{#1}
\providecommand{\url}[1]{\texttt{#1}}
\expandafter\ifx\csname urlstyle\endcsname\relax
  \providecommand{\doi}[1]{doi: #1}\else
  \providecommand{\doi}{doi: \begingroup \urlstyle{rm}\Url}\fi

\bibitem[Abadi et~al.(2016)Abadi, Agarwal, Barham, Brevdo, Chen, Citro,
  Corrado, Davis, Dean, Devin, et~al.]{Abadi2016}
Mart{\'\i}n Abadi, Ashish Agarwal, Paul Barham, Eugene Brevdo, Zhifeng Chen,
  Craig Citro, Greg~S Corrado, Andy Davis, Jeffrey Dean, Matthieu Devin, et~al.
\newblock Tensorflow: Large-scale machine learning on heterogeneous distributed
  systems.
\newblock \emph{arXiv preprint arXiv:1603.04467}, 2016.

\bibitem[Adams \& Zemel(2011)Adams and Zemel]{Adams2011}
Ryan~Prescott Adams and Richard~S Zemel.
\newblock Ranking via sinkhorn propagation.
\newblock \emph{arXiv preprint arXiv:1106.1925}, 2011.

\bibitem[Balog et~al.(2017)Balog, Tripuraneni, Ghahramani, and
  Weller]{Balog2017}
Matej Balog, Nilesh Tripuraneni, Zoubin Ghahramani, and Adrian Weller.
\newblock Lost relatives of the gumbel trick.
\newblock \emph{arXiv preprint arXiv:1706.04161}, 2017.

\bibitem[Bello et~al.(2016)Bello, Pham, Le, Norouzi, and Bengio]{Bello2016}
Irwan Bello, Hieu Pham, Quoc~V Le, Mohammad Norouzi, and Samy Bengio.
\newblock Neural combinatorial optimization with reinforcement learning.
\newblock \emph{arXiv preprint arXiv:1611.09940}, 2016.

\bibitem[Birkhoff(1946)]{Birkhoff1946}
Garrett Birkhoff.
\newblock Tres observaciones sobre el algebra lineal.
\newblock \emph{Univ. Nac. Tucum{\'a}n. Revista A}, 5:\penalty0 147--151, 1946.

\bibitem[Blei et~al.(2017)Blei, Kucukelbir, and McAuliffe]{Blei2017}
David~M Blei, Alp Kucukelbir, and Jon~D McAuliffe.
\newblock Variational inference: A review for statisticians.
\newblock \emph{Journal of the American Statistical Association}, \penalty0
  (just-accepted), 2017.

\bibitem[Bo{\v{s}}njak et~al.(2017)Bo{\v{s}}njak, Rockt{\"a}schel, Naradowsky,
  and Riedel]{pmlr-v70-bosnjak17a}
Matko Bo{\v{s}}njak, Tim Rockt{\"a}schel, Jason Naradowsky, and Sebastian
  Riedel.
\newblock Programming with a differentiable forth interpreter.
\newblock In Doina Precup and Yee~Whye Teh (eds.), \emph{Proceedings of the
  34th International Conference on Machine Learning}, volume~70 of
  \emph{Proceedings of Machine Learning Research}, pp.\  547--556,
  International Convention Centre, Sydney, Australia, 06--11 Aug 2017. PMLR.
\newblock URL \url{http://proceedings.mlr.press/v70/bosnjak17a.html}.

\bibitem[Caetano et~al.(2009)Caetano, McAuley, Cheng, Le, and
  Smola]{caetano2009learning}
Tib{\'e}rio~S Caetano, Julian~J McAuley, Li~Cheng, Quoc~V Le, and Alex~J Smola.
\newblock Learning graph matching.
\newblock \emph{IEEE transactions on pattern analysis and machine
  intelligence}, 31\penalty0 (6):\penalty0 1048--1058, 2009.

\bibitem[Cominetti \& San~Mart{\'\i}n(1994)Cominetti and
  San~Mart{\'\i}n]{Cominetti1994}
Roberto Cominetti and Jaime San~Mart{\'\i}n.
\newblock Asymptotic analysis of the exponential penalty trajectory in linear
  programming.
\newblock \emph{Mathematical Programming}, 67\penalty0 (1-3):\penalty0
  169--187, 1994.

\bibitem[Cruz et~al.(2017)Cruz, Fernando, Cherian, and Gould]{Cruz2017}
Rodrigo~Santa Cruz, Basura Fernando, Anoop Cherian, and Stephen Gould.
\newblock Deeppermnet: Visual permutation learning.
\newblock \emph{arXiv preprint arXiv:1704.02729}, 2017.

\bibitem[Cuturi(2013)]{Cuturi2013}
Marco Cuturi.
\newblock Sinkhorn distances: Lightspeed computation of optimal transport.
\newblock In \emph{Advances in neural information processing systems}, pp.\
  2292--2300, 2013.

\bibitem[Domke(2013)]{domke2013learning}
Justin Domke.
\newblock Learning graphical model parameters with approximate marginal
  inference.
\newblock \emph{IEEE transactions on pattern analysis and machine
  intelligence}, 35\penalty0 (10):\penalty0 2454--2467, 2013.

\bibitem[Dyer et~al.(2015)Dyer, Ballesteros, Ling, Matthews, and
  Smith]{dyer2015transition}
Chris Dyer, Miguel Ballesteros, Wang Ling, Austin Matthews, and Noah~A Smith.
\newblock Transition-based dependency parsing with stack long short-term
  memory.
\newblock \emph{arXiv preprint arXiv:1505.08075}, 2015.

\bibitem[Gaunt et~al.(2016)Gaunt, Brockschmidt, Singh, Kushman, Kohli, Taylor,
  and Tarlow]{gaunt2016terpret}
Alexander~L Gaunt, Marc Brockschmidt, Rishabh Singh, Nate Kushman, Pushmeet
  Kohli, Jonathan Taylor, and Daniel Tarlow.
\newblock Terpret: A probabilistic programming language for program induction.
\newblock \emph{arXiv preprint arXiv:1608.04428}, 2016.

\bibitem[Genevay et~al.(2017)Genevay, Peyr{\'e}, and Cuturi]{Genevay2017}
Aude Genevay, Gabriel Peyr{\'e}, and Marco Cuturi.
\newblock Learning generative models with sinkhorn divergences.
\newblock \emph{arXiv preprint arXiv:1706.00292}, 2017.

\bibitem[Globerson \& Jaakkola(2007)Globerson and Jaakkola]{Globerson2007}
Amir Globerson and Tommi Jaakkola.
\newblock Approximate inference using conditional entropy decompositions.
\newblock In \emph{International Conference on Artificial Intelligence and
  Statistics}, pp.\  130--138, 2007.

\bibitem[Graves et~al.(2014)Graves, Wayne, and Danihelka]{graves2014neural}
Alex Graves, Greg Wayne, and Ivo Danihelka.
\newblock Neural turing machines.
\newblock \emph{arXiv preprint arXiv:1410.5401}, 2014.

\bibitem[Hazan \& Jaakkola(2012)Hazan and Jaakkola]{Hazan2012}
Tamir Hazan and Tommi Jaakkola.
\newblock On the partition function and random maximum a-posteriori
  perturbations.
\newblock \emph{arXiv preprint arXiv:1206.6410}, 2012.

\bibitem[Hazan et~al.(2013)Hazan, Maji, and Jaakkola]{Hazan2013}
Tamir Hazan, Subhransu Maji, and Tommi Jaakkola.
\newblock On sampling from the gibbs distribution with random maximum
  a-posteriori perturbations.
\newblock In \emph{Advances in Neural Information Processing Systems}, pp.\
  1268--1276, 2013.

\bibitem[Helmbold \& Warmuth(2009)Helmbold and Warmuth]{Helmbold2009}
David~P Helmbold and Manfred~K Warmuth.
\newblock Learning permutations with exponential weights.
\newblock \emph{Journal of Machine Learning Research}, 10\penalty0
  (Jul):\penalty0 1705--1736, 2009.

\bibitem[Huang \& Jebara(2009)Huang and Jebara]{Huang2009}
Bert Huang and Tony Jebara.
\newblock Approximating the permanent with belief propagation.
\newblock \emph{arXiv preprint arXiv:0908.1769}, 2009.

\bibitem[Husz{\'a}r(2017)]{Huszar2017}
Ferenc Husz{\'a}r.
\newblock Variational inference using implicit distributions.
\newblock \emph{arXiv preprint arXiv:1702.08235}, 2017.

\bibitem[Jang et~al.(2016)Jang, Gu, and Poole]{Jang2016}
Eric Jang, Shixiang Gu, and Ben Poole.
\newblock Categorical reparameterization with gumbel-softmax.
\newblock \emph{arXiv preprint arXiv:1611.01144}, 2016.

\bibitem[Joulin \& Mikolov(2015)Joulin and Mikolov]{NIPS2015_5857}
Armand Joulin and Tomas Mikolov.
\newblock Inferring algorithmic patterns with stack-augmented recurrent nets.
\newblock In C.~Cortes, N.~D. Lawrence, D.~D. Lee, M.~Sugiyama, and R.~Garnett
  (eds.), \emph{Advances in Neural Information Processing Systems 28}, pp.\
  190--198. Curran Associates, Inc., 2015.
\newblock URL
  \url{http://papers.nips.cc/paper/5857-inferring-algorithmic-patterns-with-stack-augmented-recurrent-nets.pdf}.

\bibitem[Kingma \& Welling(2013)Kingma and Welling]{Kingma2013}
Diederik~P Kingma and Max Welling.
\newblock Auto-encoding variational bayes.
\newblock \emph{arXiv preprint arXiv:1312.6114}, 2013.

\bibitem[Knight(2008)]{Knight2008}
Philip~A Knight.
\newblock The sinkhorn--knopp algorithm: convergence and applications.
\newblock \emph{SIAM Journal on Matrix Analysis and Applications}, 30\penalty0
  (1):\penalty0 261--275, 2008.

\bibitem[Kosowsky \& Yuille(1994)Kosowsky and Yuille]{Kosowsky1994}
JJ~Kosowsky and Alan~L Yuille.
\newblock The invisible hand algorithm: Solving the assignment problem with
  statistical physics.
\newblock \emph{Neural networks}, 7\penalty0 (3):\penalty0 477--490, 1994.

\bibitem[Krizhevsky et~al.(2012)Krizhevsky, Sutskever, and
  Hinton]{Krizhevsky2012}
Alex Krizhevsky, Ilya Sutskever, and Geoffrey~E Hinton.
\newblock Imagenet classification with deep convolutional neural networks.
\newblock In \emph{Advances in neural information processing systems}, pp.\
  1097--1105, 2012.

\bibitem[Kuhn(1955)]{Kuhn1955}
Harold~W Kuhn.
\newblock The hungarian method for the assignment problem.
\newblock \emph{Naval Research Logistics (NRL)}, 2\penalty0 (1-2):\penalty0
  83--97, 1955.

\bibitem[Li et~al.(2013)Li, Swersly, Ryan, and Zemel]{Li2013}
Ke~Li, Kevin Swersly, Ryan, and Richard~S Zemel.
\newblock Efficient feature learning using perturb-and-map.
\newblock \emph{NIPS Workshop on Perturbations, Optimization, and Statistics},
  2013.

\bibitem[Linderman et~al.(2017)Linderman, Mena, Cooper, Paninski, and
  Cunningham]{Linderman2017}
Scott~W Linderman, Gonzalo~E Mena, Hal Cooper, Liam Paninski, and John~P
  Cunningham.
\newblock Reparameterizing the birkhoff polytope for variational permutation
  inference.
\newblock \emph{arXiv preprint arXiv:1710.09508}, 2017.

\bibitem[Maddison et~al.(2016)Maddison, Mnih, and Teh]{Maddison2016}
Chris~J Maddison, Andriy Mnih, and Yee~Whye Teh.
\newblock The concrete distribution: A continuous relaxation of discrete random
  variables.
\newblock \emph{arXiv preprint arXiv:1611.00712}, 2016.

\bibitem[Munkres(1957)]{Munkres1957}
James Munkres.
\newblock Algorithms for the assignment and transportation problems.
\newblock \emph{Journal of the society for industrial and applied mathematics},
  5\penalty0 (1):\penalty0 32--38, 1957.

\bibitem[Neelakantan et~al.(2015)Neelakantan, Le, and
  Sutskever]{neelakantan2015neural}
Arvind Neelakantan, Quoc~V Le, and Ilya Sutskever.
\newblock Neural programmer: Inducing latent programs with gradient descent.
\newblock \emph{arXiv preprint arXiv:1511.04834}, 2015.

\bibitem[Noroozi \& Favaro(2016)Noroozi and Favaro]{Noroozi2016}
Mehdi Noroozi and Paolo Favaro.
\newblock Unsupervised learning of visual representations by solving jigsaw
  puzzles.
\newblock In \emph{European Conference on Computer Vision}, pp.\  69--84.
  Springer, 2016.

\bibitem[Papandreou \& Yuille(2011)Papandreou and Yuille]{Papandreou2011}
George Papandreou and Alan~L Yuille.
\newblock Perturb-and-map random fields: Using discrete optimization to learn
  and sample from energy models.
\newblock In \emph{Computer Vision (ICCV), 2011 IEEE International Conference
  on}, pp.\  193--200. IEEE, 2011.

\bibitem[Petterson et~al.(2009)Petterson, Yu, McAuley, and
  Caetano]{petterson2009exponential}
James Petterson, Jin Yu, Julian~J McAuley, and Tib{\'e}rio~S Caetano.
\newblock Exponential family graph matching and ranking.
\newblock In \emph{Advances in Neural Information Processing Systems}, pp.\
  1455--1463, 2009.

\bibitem[Ranganath et~al.(2016)Ranganath, Tran, Altosaar, and
  Blei]{Ranganath2016}
Rajesh Ranganath, Dustin Tran, Jaan Altosaar, and David Blei.
\newblock Operator variational inference.
\newblock In \emph{Advances in Neural Information Processing Systems}, pp.\
  496--504, 2016.

\bibitem[Rao(1984)]{rao1984}
C~Radhakrishna Rao.
\newblock Convexity properties of entropy functions and analysis of diversity.
\newblock \emph{Lecture Notes-Monograph Series}, pp.\  68--77, 1984.

\bibitem[Rockafellar(1970)]{Rockafellar1970}
Ralph~Tyrell Rockafellar.
\newblock \emph{Convex analysis}.
\newblock Princeton university press, 1970.

\bibitem[Rockt{\"a}schel \& Riedel(2017)Rockt{\"a}schel and
  Riedel]{rocktaschel2017end}
Tim Rockt{\"a}schel and Sebastian Riedel.
\newblock End-to-end differentiable proving.
\newblock \emph{arXiv preprint arXiv:1705.11040}, 2017.

\bibitem[Salimans et~al.(2018)Salimans, Zhang, Radford, and
  Metaxas]{Salimans2018}
Tim Salimans, Han Zhang, Alec Radford, and Dimitris Metaxas.
\newblock Improving {GAN}s using optimal transport.
\newblock In \emph{International Conference on Learning Representations}, 2018.
\newblock URL \url{https://openreview.net/forum?id=rkQkBnJAb}.

\bibitem[Sinkhorn(1964)]{Sinkhorn1964}
Richard Sinkhorn.
\newblock A relationship between arbitrary positive matrices and doubly
  stochastic matrices.
\newblock \emph{The annals of mathematical statistics}, 35\penalty0
  (2):\penalty0 876--879, 1964.

\bibitem[Sinkhorn \& Knopp(1967)Sinkhorn and Knopp]{Sinkhorn1967}
Richard Sinkhorn and Paul Knopp.
\newblock Concerning nonnegative matrices and doubly stochastic matrices.
\newblock \emph{Pacific Journal of Mathematics}, 21\penalty0 (2):\penalty0
  343--348, 1967.

\bibitem[Stoyanov et~al.(2011)Stoyanov, Ropson, and
  Eisner]{stoyanov2011empirical}
Veselin Stoyanov, Alexander Ropson, and Jason Eisner.
\newblock Empirical risk minimization of graphical model parameters given
  approximate inference, decoding, and model structure.
\newblock In \emph{Proceedings of the Fourteenth International Conference on
  Artificial Intelligence and Statistics}, pp.\  725--733, 2011.

\bibitem[Sukhbaatar et~al.(2015)Sukhbaatar, Weston, Fergus,
  et~al.]{sukhbaatar2015end}
Sainbayar Sukhbaatar, Jason Weston, Rob Fergus, et~al.
\newblock End-to-end memory networks.
\newblock In \emph{Advances in neural information processing systems}, pp.\
  2440--2448, 2015.

\bibitem[Tang et~al.(2016)Tang, Ruozzi, Belanger, and Jebara]{tang2015bethe}
Kui Tang, Nicholas Ruozzi, David Belanger, and Tony Jebara.
\newblock Bethe learning of conditional random fields via map decoding.
\newblock \emph{AISTATS}, 2016.

\bibitem[Tomczak(2016)]{Tomczak2016}
Jakub~M Tomczak.
\newblock On some properties of the low-dimensional gumbel perturbations in the
  perturb-and-map model.
\newblock \emph{Statistics \& Probability Letters}, 115:\penalty0 8--15, 2016.

\bibitem[Tran et~al.(2017)Tran, Ranganath, and Blei]{Tran2017}
Dustin Tran, Rajesh Ranganath, and David~M Blei.
\newblock Deep and hierarchical implicit models.
\newblock \emph{arXiv preprint arXiv:1702.08896}, 2017.

\bibitem[Varshney et~al.(2011)Varshney, Chen, Paniagua, Hall, and
  Chklovskii]{Varshney2011}
Lav~R Varshney, Beth~L Chen, Eric Paniagua, David~H Hall, and Dmitri~B
  Chklovskii.
\newblock Structural properties of the caenorhabditis elegans neuronal network.
\newblock \emph{PLoS computational biology}, 7\penalty0 (2):\penalty0 e1001066,
  2011.

\bibitem[Villani(2003)]{Villani2003}
C{\'e}dric Villani.
\newblock \emph{Topics in optimal transportation}.
\newblock Number~58. American Mathematical Soc., 2003.

\bibitem[Vilnis et~al.(2015)Vilnis, Belanger, Sheldon, and
  McCallum]{Vilnis2015}
Luke Vilnis, David Belanger, Daniel Sheldon, and Andrew McCallum.
\newblock Bethe projections for non-local inference.
\newblock \emph{arXiv preprint arXiv:1503.01397}, 2015.

\bibitem[Vinyals et~al.(2015)Vinyals, Bengio, and Kudlur]{Vinyals2015}
Oriol Vinyals, Samy Bengio, and Manjunath Kudlur.
\newblock Order matters: Sequence to sequence for sets.
\newblock \emph{arXiv preprint arXiv:1511.06391}, 2015.

\bibitem[Wainwright et~al.(2008)Wainwright, Jordan, et~al.]{Wainwright2008}
Martin~J Wainwright, Michael~I Jordan, et~al.
\newblock Graphical models, exponential families, and variational inference.
\newblock \emph{Foundations and Trends{\textregistered} in Machine Learning},
  1\penalty0 (1--2):\penalty0 1--305, 2008.

\bibitem[Williams(1992)]{williams1992simple}
Ronald~J Williams.
\newblock Simple statistical gradient-following algorithms for connectionist
  reinforcement learning.
\newblock \emph{Machine learning}, 8\penalty0 (3-4):\penalty0 229--256, 1992.

\bibitem[Yedidia et~al.(2001)Yedidia, Freeman, and Weiss]{Yedidia2001}
Jonathan~S Yedidia, William~T Freeman, and Yair Weiss.
\newblock Bethe free energy, kikuchi approximations, and belief propagation
  algorithms.
\newblock \emph{Advances in neural information processing systems}, 13, 2001.

\end{thebibliography}

\clearpage
\appendix

\section{Proof of Theorem 1}
\label{sec:proofs}
In this section we give a rigorous proof of Theorem 1. Also, in \ref{sub:relation} we briefly comment on how  Theorem 1 extend a perhaps more intuitive results, in the probability simplex.

Before stating Theorem 1 we need some preliminary definitions. We start by recalling a well-known result in matrix theory, the Sinkhorn theorem.
\begin{theoremS*}
Let $A$ be an $N$ dimensional square matrix  with positive entries. Then, there exists two diagonal matrices $D_1,D_2$, with positive diagonals, so that $P=D_1AD_2$ is a doubly stochastic matrix. These $D_1,D_2$ are unique up to a scalar factor. Also, $P$ can be obtained through the iterative process of alternatively normalizing the rows and columns of $A$.
\end{theoremS*}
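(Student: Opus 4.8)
The plan is to prove the three assertions — existence of a doubly stochastic scaling $D_1AD_2$, its uniqueness up to a scalar, and convergence of the alternating row/column normalization to it — through one convex-optimization argument, which I find cleanest.

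For \emph{existence}, I would introduce $\psi(t)=\sum_{i=1}^N\log\big(\sum_{j=1}^N A_{ij}e^{t_j}\big)$ on the hyperplane $H=\{t\in\mathbb{R}^N:\sum_j t_j=0\}$. With $y_j=e^{t_j}$, the positive matrix $B(t)=A\,\mathrm{diag}(y)$ has row sums $r_i(t)=\sum_j A_{ij}e^{t_j}$, and its row-normalization $P(t)=\mathrm{diag}(1/r(t))\,B(t)$ has every row sum equal to one; a short computation gives $\partial\psi/\partial t_j=\sum_i P(t)_{ij}$, the $j$-th column sum of $P(t)$. Since $A$ has strictly positive entries, $\psi$ is coercive on $H$ (as $\|t\|\to\infty$ with $\sum_j t_j=0$ one has $\max_j t_j\to\infty$, so every $r_i(t)\to\infty$), hence attains a minimum at some $t^\star\in H$; there $\nabla\psi(t^\star)$ is orthogonal to $H$, so the column sums of $P(t^\star)$ are all equal, and since they total $N$ each equals one. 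Thus $P(t^\star)=D_1AD_2$ is doubly stochastic with $D_2=\mathrm{diag}(e^{t^\star})$ and $D_1=\mathrm{diag}(1/r(t^\star))$, both positive diagonal.

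For \emph{uniqueness up to a scalar}, note that $\nabla^2\psi=\sum_i\big(\mathrm{diag}(p^{(i)})-p^{(i)}(p^{(i)})^{\top}\big)$ with each $p^{(i)}$ a strictly positive probability vector; each summand is positive semidefinite with kernel exactly $\mathrm{span}(\mathbf 1_N)$, so $\nabla^2\psi$ has kernel $\mathrm{span}(\mathbf 1_N)$ and is positive definite on $H$, making $\psi|_H$ strictly convex with unique minimizer $t^\star$. Conversely, if $D_1AD_2$ is doubly stochastic, absorbing a scalar so that $\prod_j(D_2)_{jj}=1$ puts $t=\log\mathrm{diag}(D_2)$ in $H$, and the column-sum conditions say precisely that $t$ is critical for $\psi|_H$, hence $t=t^\star$; so $D_2$ is determined up to a positive scalar and $D_1$ by its reciprocal.

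Finally, alternating row and column normalization of $A$ is exactly two-block coordinate descent on the convex function $F(u,v)=\sum_{i,j}A_{ij}e^{u_i+v_j}-\sum_i u_i-\sum_j v_j$: the $u$ minimizing $F$ for fixed $v$ sets $e^{u_i}$ to the reciprocal of the $i$-th row sum of $A\,\mathrm{diag}(e^v)$, i.e.\ row-normalizes, and symmetrically in $v$. Since $F$ is convex, bounded below, and strictly convex and coercive modulo the gauge $(u,v)\mapsto(u+c\mathbf 1,v-c\mathbf 1)$ (by the same Hessian computation), exact alternating minimization drives $F$ to its infimum and the iterates $\mathrm{diag}(e^u)\,A\,\mathrm{diag}(e^v)$ to the unique doubly stochastic scaling found above. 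The main obstacle is precisely this convergence step: existence and uniqueness follow softly from coercivity and strict convexity, but certifying that the \emph{specific} algorithm converges requires either invoking (and checking the hypotheses of) a convergence theorem for exact two-block coordinate descent — using positivity of $A$ and coercivity modulo gauge to keep the iterates in a compact set — or, as a self-contained alternative, showing that row and column normalization are non-expansive in Hilbert's projective metric and that their composition is a strict contraction because the positive matrix $A$ has finite projective oscillation, which yields geometric convergence to the unique fixed point directly.
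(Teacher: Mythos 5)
Your argument is essentially correct, but it is worth noting that the paper does not prove this statement at all: its ``proof'' is a pointer to \cite{Sinkhorn1964,Sinkhorn1967,Knight2008}, so any self-contained argument is by construction a different route. What you give is the variational (entropy/convex-duality) proof: existence and uniqueness of the scaling via coercivity and strict convexity of $\psi$ on the zero-sum hyperplane (your gradient identity $\partial\psi/\partial t_j=\sum_i P(t)_{ij}$ and the Hessian decomposition into terms $\mathrm{diag}(p^{(i)})-p^{(i)}(p^{(i)})^{\top}$ with kernel $\mathrm{span}(\mathbf 1_N)$ are both correct, and positivity of $A$ is used exactly where it is needed), and convergence of the alternating normalization as exact two-block coordinate descent on $F(u,v)=\sum_{i,j}A_{ij}e^{u_i+v_j}-\sum_i u_i-\sum_j v_j$. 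This is quite different in flavor from Sinkhorn's original fixed-point/ratio argument, but it meshes unusually well with this particular paper: your $F$ is, up to reparameterization, the Lagrangian dual of the entropy-regularized linear program that the paper's Lemma~1 uses to characterize $S(X/\tau)$, so your proof in effect derives the Sinkhorn theorem from the same convex-analytic machinery the paper later builds on, rather than importing it as a black box.

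The one place where your write-up is not fully closed is the convergence of the specific iteration, and you correctly flag it as the main obstacle. The claim that ``exact alternating minimization drives $F$ to its infimum and the iterates to the unique scaling'' does require carrying out the compactness-modulo-gauge argument: one must check that $F$ is coercive on a transversal to the invariance $(u,v)\mapsto(u+c\mathbf 1_N,\,v-c\mathbf 1_N)$ so that the iterates stay in a compact set, and then that every limit point is a minimizer (strict convexity in each block plus continuity of the block-minimization maps gives this). Both checks go through because $A$ has strictly positive entries, but they are not free. The Hilbert projective metric alternative you mention is the cleaner way to certify convergence of the iterates themselves: Birkhoff's contraction theorem gives that a strictly positive matrix contracts the projective metric by a factor $\tanh\bigl(\Delta(A)/4\bigr)<1$, where $\Delta(A)=\max_{i,j,k,l}\log\frac{A_{ik}A_{jl}}{A_{jk}A_{il}}$ is finite precisely because $A>0$, and this yields geometric convergence of the alternating normalization to the unique fixed point (this is the Franklin--Lorenz argument). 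Either completion is standard; with one of them spelled out, your proof is complete and strictly more informative than the paper's citation.
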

\begin{proof}
See \cite{Sinkhorn1964,Sinkhorn1967,Knight2008}.
\end{proof}
For our purposes, it is useful to define the Sinkhorn operator $S(\cdot)$ as follows:
\begin{mydef}
Let $X$ be an arbitrary matrix with dimension $N$. Denote $\mathcal{T}_r(X)= X \oslash (X 1_N1_N^\top),\;  \mathcal{T}_c(X)= X   \oslash (1_N1_N^\top X $) (with $\oslash$ representing the element-wise division and $1_n$ the $n$ dimensional vector of ones) the row and column-wise normalization operators, respectively. Then, we define the Sinkhorn operator applied to $X$; $S(X)$, as follows:
\begin{eqnarray}
\nonumber S^0(X) &=& \exp(X), \\
\nonumber S^l(X) & = & \mathcal{T}_c\left(\mathcal{T}_r(S^{l-1}(X))\right),\\
\nonumber S(X) &= &\lim_{n\rightarrow \infty} S^l(X). 
\end{eqnarray}
Here, the $\exp(\cdot)$ operator is interpreted as the component-wise exponential. By Sinkhorn's theorem, $S(X)$ is a doubly stochastic matrix.
\end{mydef}

Finally, we review some key properties related to the space of doubly stochastic matrices. First, we need to define a relevant geometric object.
\begin{mydef}
We denote by  $\mathcal{B}_N$ the $N$-Birkhoff polytope, i.e., the set of doubly stochastic matrices of dimension $N$. Likewise, we denote $\mathcal{P}_n$ be the set of permutation matrices of size $N$. Alternatively, 
 \begin{equation}\nonumber \mathcal{B}_N = \{ P\in [0,1]\in\mathbb{R}^{N,N}\; P1_N =1_N, P^\top 1_N =1_N\},\end{equation}
 \begin{equation} \nonumber \mathcal{P}_N = \{P\in \{0,1\}\in\mathbb{R}^{N,N}\; P 1_N =1_N, P^\top 1_N=1_N\}.
\end{equation}
\end{mydef}

\begin{theoremB*}
$\mathcal{P}_N$ is the set of extremal points of $\mathcal{B}_N$. In other words, the convex hull of $\mathcal{B}_N$ equals $\mathcal{P}_N$.
\end{theoremB*}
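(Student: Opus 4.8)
The plan is to establish the two halves of the Birkhoff--von Neumann theorem separately: (i) every permutation matrix is an extreme point of $\mathcal{B}_N$, and (ii) every doubly stochastic matrix lies in $\mathrm{conv}(\mathcal{P}_N)$. Granting both, one gets $\mathcal{B}_N=\mathrm{conv}(\mathcal{P}_N)$; since the extreme points of the convex hull of a finite set are always contained in that set, (i) then forces $\mathrm{ext}(\mathcal{B}_N)=\mathcal{P}_N$, which is the assertion (and its displayed reformulation in the statement).

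Half (i) is immediate: if $P\in\mathcal{P}_N$ and $P=\frac{1}{2}(A+B)$ with $A,B\in\mathcal{B}_N$, then at every position where $P_{ij}=0$ nonnegativity forces $A_{ij}=B_{ij}=0$, and since each row of $A$ and of $B$ sums to $1$ while $P$ has a single $1$ in that row, the remaining entries of $A$ and of $B$ must equal $1$; hence $A=B=P$, so $P$ is extreme (and, by the same reasoning, not a proper convex combination of any collection of doubly stochastic matrices).

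For half (ii) I would induct on the number of strictly positive entries of $P\in\mathcal{B}_N$. The crux is the claim that the bipartite graph with an edge $(i,j)$ whenever $P_{ij}>0$ has a perfect matching, which I would get from Hall's marriage theorem: for a set $I$ of rows, letting $J=\{\,j: P_{ij}>0 \text{ for some } i\in I\,\}$, one has
\[ |I| = \sum_{i\in I}\sum_{j} P_{ij} = \sum_{i\in I}\sum_{j\in J} P_{ij} \le \sum_{j\in J}\sum_{i} P_{ij} = |J|, \]
so Hall's condition holds and there is a permutation $\sigma$ with $P_{i,\sigma(i)}>0$ for all $i$. Set $\lambda=\min_i P_{i,\sigma(i)}\in(0,1]$ and let $P_\sigma$ be that permutation matrix: if $\lambda=1$ then $P=P_\sigma$; otherwise $Q:=(1-\lambda)^{-1}(P-\lambda P_\sigma)$ is doubly stochastic with strictly fewer positive entries (the entry attaining the minimum is now $0$, and no new positive entries appear), so by the inductive hypothesis $Q$ is a convex combination of permutation matrices, and hence so is $P=\lambda P_\sigma+(1-\lambda)Q$.

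The main obstacle is precisely the existence of this perfect matching in the support graph; the rest is bookkeeping, and the base case (a doubly stochastic matrix with only $N$ positive entries is already a permutation matrix) is trivial. An alternative to the induction, proving (ii) together with the vertex characterization in one stroke, is to argue directly: if $P\in\mathcal{B}_N$ is not a permutation matrix it has an entry in $(0,1)$, and the row-sum and column-sum constraints force each fractional entry to have a fractional partner in its row and in its column, so the subgraph of fractional positions has minimum degree $\ge 2$ and therefore contains an even cycle; adding $+\varepsilon$ and $-\varepsilon$ alternately along this cycle, for small $\varepsilon$, yields $P\pm\varepsilon D\in\mathcal{B}_N$ with $D\neq 0$, so $P$ is the midpoint of a segment in $\mathcal{B}_N$ and not a vertex. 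As $\mathcal{B}_N$ is a bounded polyhedron it is the convex hull of its vertices, which this argument identifies with $\mathcal{P}_N$.
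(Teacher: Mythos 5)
Your proof is correct, but it is worth noting that the paper does not actually prove this statement at all: it simply cites \cite{Birkhoff1946}, treating the Birkhoff--von Neumann theorem as a classical black box. You instead supply a complete, self-contained argument, and both halves check out. The extremality of permutation matrices via the zero-pattern/row-sum argument is fine; the Hall's-condition computation $|I|=\sum_{i\in I}\sum_{j\in J}P_{ij}\le\sum_{j\in J}\sum_i P_{ij}=|J|$ is the standard and correct way to extract a permutation in the support, and the induction on the number of positive entries terminates because the minimizing entry is zeroed out while no new positive entries appear (with the correct base case that $N$ positive entries force a permutation matrix). Your alternative cycle-perturbation argument is also valid and is arguably the cleaner route to the vertex characterization, since it directly exhibits any non-permutation doubly stochastic matrix as a midpoint of a segment in $\mathcal{B}_N$ and then invokes the fact that a bounded polyhedron is the convex hull of its vertices. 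One small remark on the statement itself: as printed, ``the convex hull of $\mathcal{B}_N$ equals $\mathcal{P}_N$'' has the two sets transposed (it should read that the convex hull of $\mathcal{P}_N$ equals $\mathcal{B}_N$); you implicitly prove the intended, correct version.
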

\begin{proof}
See \cite{Birkhoff1946}.
\end{proof}
\subsection{An approximation theorem for the matching problem}
Let's now focus on the standard combinatorial assignment (or matching) problem, for an arbitrary $N$ dimensional matrix $X$. We aim to maximize a linear functional (in the sense of the Frobenius norm) in the space of permutation matrices. In this context, let's define the matching operator $M(\cdot)$ as the one that returns the solution of the assignment problem:
\begin{equation}\label{eq:matching}M(X) \equiv \argmax_{P\in \mathcal{P}_N}  \left< P,X\right>_F.\end{equation}

Likewise, we define $\tilde{M}(\cdot)$ as a related operator, but changing the feasible space by the Birkhoff polytope:
\begin{equation}\label{eq:matchingbirk}\tilde{M}(X) \equiv  \argmax_{P\in \mathcal{B}_N} \left< P,X\right>_F.\end{equation}
Notice that in general $\tilde{M}(X), M(X)$ might not be unique matrices, but a face of the Birkhoff polytope, or a set of permutations, respectively (see Lemma 2 for details). In any case, the relation $M(X)\subseteq \tilde{M}(X)$ holds by virtue of Birkhoff's theorem, and the fundamental theorem of linear programming. 

Now we state the main theorem of this work:
\begin{theorem}
For a doubly stochastic matrix $P$ define its entropy as $h(P)=-\sum_{i,j}P_{i,j}\log\left(P_{i,j}\right)$. Then, one has, \begin{equation}\label{eq:entropyreg}S(X/\tau)=\argmax_{P\in\mathcal{B}_N}\left< P,X\right>_F+\tau h(P).\end{equation}
Now, assume also the entries of $X$ are drawn independently from a distribution that is absolutely continuous with respect to the Lebesgue measure in $\mathcal{R}$. 
Then, almost surely the following convergence holds: \begin{equation}\label{eq:convergence} M(X) = \lim_{\tau\rightarrow 0^+} S(X/\tau) .\end{equation}
\end{theorem}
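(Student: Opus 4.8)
I would establish the two assertions separately, the first by strict concavity plus first-order conditions. The map $P\mapsto \langle P,X\rangle_F+\tau h(P)$ is continuous on the compact convex set $\mathcal{B}_N$ (with the convention $0\log 0=0$), hence attains its maximum. The first step is to observe that any maximizer $P^\star$ lies in the relative interior, i.e.\ $P^\star_{ij}>0$ for all $i,j$: since $\partial h/\partial P_{ij}=-\log P_{ij}-1\to+\infty$ as $P_{ij}\to 0^{+}$, a feasible $2\times 2$ circulation that moves an infinitesimal amount of mass onto a vanishing entry strictly increases the objective, so no feasible point with a zero entry can be optimal. On the relative interior the objective is strictly concave (the Hessian of $h$ is $-\mathrm{diag}(1/P_{ij})$, negative definite), hence the maximizer is unique; call it $P^\star=P^\star(\tau)$. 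The nonnegativity constraints being inactive, it suffices to impose the $2N$ equalities $P\mathbf{1}_N=\mathbf{1}_N$, $P^\top\mathbf{1}_N=\mathbf{1}_N$ through Lagrange multipliers $\alpha,\beta$; stationarity reads $X_{ij}-\tau(\log P^\star_{ij}+1)-\alpha_i-\beta_j=0$, i.e.\ $P^\star_{ij}=u_i\,e^{X_{ij}/\tau}\,v_j$ with $u_i,v_j>0$, or in matrix form $P^\star=D_1\exp(X/\tau)D_2$ for positive diagonal $D_1,D_2$. Since $\exp(X/\tau)$ has positive entries, the Sinkhorn theorem quoted above guarantees that such a doubly stochastic rescaling exists, is unique, and is produced by the alternating row/column normalizations starting from $\exp(X/\tau)$ --- which is precisely $S(X/\tau)$. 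As $P^\star$ is doubly stochastic of this form, $P^\star=S(X/\tau)$, giving the claimed identity.

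For the limit, set $V^\star:=\max_{P\in\mathcal{P}_N}\langle P,X\rangle_F$, which by Birkhoff's theorem and the fundamental theorem of linear programming also equals $\max_{P\in\mathcal{B}_N}\langle P,X\rangle_F$. Fix an optimal permutation $P_0$; optimality of $P^\star(\tau)$ in the regularized problem, together with $h(P_0)=0$ and $0\le h\le N\log N$ on $\mathcal{B}_N$, yields the sandwich
\[
V^\star-\tau N\log N\;\le\;\langle P^\star(\tau),X\rangle_F\;\le\;V^\star ,
\]
so $\langle P^\star(\tau),X\rangle_F\to V^\star$ as $\tau\to0^{+}$. The absolute-continuity hypothesis enters next: for distinct permutation matrices $P_1\neq P_2$ the set $\{X:\langle P_1-P_2,X\rangle_F=0\}$ is a proper hyperplane of $\mathbb{R}^{N\times N}$, hence Lebesgue-null, and there are finitely many such pairs, so almost surely all the values $\langle P,X\rangle_F$, $P\in\mathcal{P}_N$, are distinct. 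On that event $M(X)$ is a single permutation $\{P_0\}$, and moreover $\tilde M(X)=\{P_0\}$ as well: if the linear functional were maximized over a face of $\mathcal{B}_N$ of dimension $\ge1$, that face would contain two distinct vertices --- permutation matrices --- of equal value, a contradiction. Finally, $\mathcal{B}_N$ being compact, along any $\tau_k\downarrow0$ pass to a convergent subsequence $P^\star(\tau_{k_j})\to Q$; continuity of $\langle\cdot,X\rangle_F$ and the energy convergence give $\langle Q,X\rangle_F=V^\star$, so $Q\in\tilde M(X)=\{P_0\}$, i.e.\ $Q=P_0$. Every subsequential limit being $P_0$, the whole family converges: $S(X/\tau)=P^\star(\tau)\to P_0=M(X)$.

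I expect the main technical care to lie in the first part --- checking that the entropic barrier forces the optimizer into the relative interior, so that the KKT system collapses exactly to the diagonal-scaling form $D_1\exp(X/\tau)D_2$, and then invoking the \emph{uniqueness} half of the Sinkhorn theorem to identify that stationary point with $S(X/\tau)$ rather than with some unspecified doubly stochastic matrix. By contrast, the genericity statement (almost-sure uniqueness of the assignment solution and of the optimal face) is routine measure theory, and the passage $\tau\to0^{+}$ is a standard compactness / $\Gamma$-convergence-type argument once the energy convergence is in hand.
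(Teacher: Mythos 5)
Your proposal is correct and follows essentially the same route as the paper's proof: a Lagrangian/stationarity argument identifying the regularized maximizer with $D_1\exp(X/\tau)D_2$ and hence with $S(X/\tau)$ via the uniqueness part of Sinkhorn's theorem, a measure-zero (hyperplane) argument for almost-sure uniqueness of the optimal vertex, and value convergence plus uniqueness to pass to the limit $\tau\to0^+$ (the paper argues this last step by contradiction where you use compactness and subsequences, but the two are interchangeable). If anything, your explicit verification that the entropic barrier forces the maximizer into the relative interior of $\mathcal{B}_N$ --- so that the nonnegativity constraints can legitimately be dropped from the KKT system --- is a point the paper's Lemma~1 passes over silently, and is a worthwhile addition.
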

We divide the proof of Theorem 1 in three steps. First,  in Lemma 1 we state a relation between $S(X/\tau)$ and the entropy regularized problem in equation \eqref{eq:entropyreg}. Then, in Lemma 2 we show that under our stochastic regime, uniqueness of solutions holds. Finally, in Lemma 3 we show that in this well-behaved regime, convergence of solutions holds. states that and Lemma 2b endows us with the tools to make a limit argument. 
\subsubsection{Intermediate results for Theorem 1}

\begin{lemma}
\begin{equation} \nonumber S(X/\tau)=\argmax_{P\in\mathcal{B}_N}\left< P,X\right>_F+\tau h(P).\end{equation}
\end{lemma}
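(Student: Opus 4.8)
### Proof proposal for Lemma 1

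The plan is to identify $S(X/\tau)$ as the unique maximizer of the strictly concave program
$\max_{P\in\mathcal{B}_N}\langle P,X\rangle_F+\tau h(P)$
by writing down the KKT/Lagrangian conditions for that program and checking that they force $P$ to have exactly the ``diagonal scaling of $\exp(X/\tau)$'' form that the Sinkhorn theorem produces. First I would note that the objective is continuous and strictly concave on the compact convex set $\mathcal{B}_N$ (strict concavity coming from the entropy term, since $-t\log t$ is strictly concave), so a maximizer exists and is unique; moreover the optimum lies in the relative interior (all $P_{i,j}>0$), because the gradient of $h$ blows up at the boundary, so the inequality constraints $P_{i,j}\ge 0$ are inactive and we only need Lagrange multipliers for the $2N$ equality constraints $P\mathbf{1}_N=\mathbf{1}_N$, $P^\top\mathbf{1}_N=\mathbf{1}_N$.

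Next I would form the Lagrangian $\mathcal{L}(P,\alpha,\beta)=\langle P,X\rangle_F+\tau h(P)+\alpha^\top(P\mathbf{1}_N-\mathbf{1}_N)+\beta^\top(P^\top\mathbf{1}_N-\mathbf{1}_N)$ and set $\partial\mathcal{L}/\partial P_{i,j}=0$, which gives $X_{i,j}-\tau(\log P_{i,j}+1)+\alpha_i+\beta_j=0$, i.e.
$P_{i,j}=\exp\!\big(X_{i,j}/\tau\big)\cdot\exp\!\big((\alpha_i-\tau)/\tau\big)\cdot\exp(\beta_j/\tau)=u_i\,(\exp(X/\tau))_{i,j}\,v_j$
for positive scalars $u_i,v_j$. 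Thus the optimizer is necessarily a positive left/right diagonal rescaling of $\exp(X/\tau)$ that is doubly stochastic. By the Sinkhorn theorem (quoted in the excerpt), such a doubly stochastic rescaling of the positive matrix $\exp(X/\tau)$ exists and the pair of diagonal matrices is unique up to a scalar, hence the doubly stochastic matrix $D_1\exp(X/\tau)D_2$ is unique — and it equals $S(X/\tau)$ by the iterative characterization in the definition of $S$. Since the KKT conditions are also sufficient here (concave objective, affine constraints, Slater's condition trivially satisfied, e.g. by $\tfrac1N\mathbf{1}_N\mathbf{1}_N^\top$), this unique candidate is exactly the unique maximizer, proving $S(X/\tau)=\argmax_{P\in\mathcal{B}_N}\langle P,X\rangle_F+\tau h(P)$.

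The main obstacle, and the only place that needs genuine care, is the boundary/interiority argument: one must rule out maximizers on $\partial\mathcal{B}_N$ before invoking the equality-only KKT conditions. I would handle this by a perturbation argument — given any $P$ on the boundary with some $P_{i,j}=0$, moving an infinitesimal mass $\delta$ into that entry (along a feasible direction staying in $\mathcal{B}_N$, e.g. adding a small multiple of a cycle) changes the linear term by $O(\delta)$ but changes $h$ by $+\Theta(-\delta\log\delta)\to+\infty\cdot\delta$, so the objective strictly increases; hence no boundary point is optimal. A secondary bookkeeping point is to make sure the scalar-factor nonuniqueness in Sinkhorn's theorem does not leak into nonuniqueness of $P$ itself: rescaling $D_1\mapsto cD_1$, $D_2\mapsto c^{-1}D_2$ leaves $P=D_1\exp(X/\tau)D_2$ unchanged, so the doubly stochastic matrix is genuinely unique, matching the uniqueness we already obtained from strict concavity. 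Everything else is routine.
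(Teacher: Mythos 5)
Your proof is correct and takes essentially the same route as the paper's: existence and uniqueness from strict concavity of the entropy-regularized objective, Lagrangian stationarity forcing the form $P=D_1\exp(X/\tau)D_2$, and Sinkhorn's theorem to identify this doubly stochastic scaling with $S(X/\tau)$. The only difference is that you explicitly justify interiority (ruling out boundary maximizers via the cycle perturbation, so that the nonnegativity constraints can be dropped from the KKT system), a step the paper leaves implicit; this is a refinement of the same argument, not a different one.
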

\begin{proof}
We first notice that the solution $P_\tau$ of the above problem exists, and it is unique. This is a simple consequence of the strict concavity of the objective (recall the entropy is strictly concave \cite{rao1984}). 

Now, let's state the Lagrangian of this constrained problem
\begin{equation}\nonumber \mathcal{L}(\alpha,\beta, P) = \left< P,X\right>_F+\tau h(P) + \alpha^\top(P1_N -1_N) +\beta^\top(P^\top 1_N -1_N),\end{equation}
It is easy to see, by stating the equality $\partial{\mathcal{L}}/\partial{P}=0$ that one must have for each $i,j$, \begin{equation}\nonumber p_\tau^{i,j}=\exp(\alpha_i/\tau -1/2) \exp(X_{i,j}/\tau)\exp(\beta_j/\tau -1/2),\end{equation}
in other words, $P_\tau =D_1\exp(X_{i,j}/\tau)D_2$ for certain diagonal matrices $D_1,D_2$, with positive diagonals. By Sinkhorn's theorem, and our definition of the Sinkhorn operator, we must have that $S(X/\tau)=P_\tau$.
\end{proof}
\begin{lemma}
Suppose the entries of $X$ are drawn independently from a distribution that is absolutely
continuous with respect to the Lebesgue measure in $\mathbb{R}$. Then, almost surely, 
$\tilde{M}(X) = M(X)$ is a unique permutation matrix.
\end{lemma}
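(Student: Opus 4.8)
The plan is to reduce the statement to a measure-zero argument about finitely many hyperplanes in $\mathbb{R}^{N^2}$. First I would recall the two structural facts already cited in the text: by Birkhoff's theorem the vertices of $\mathcal{B}_N$ are exactly the permutation matrices, and by the fundamental theorem of linear programming the linear functional $P\mapsto\langle P,X\rangle_F$ attains its maximum over $\mathcal{B}_N$ at a vertex. Hence $\max_{P\in\mathcal{B}_N}\langle P,X\rangle_F=\max_{P\in\mathcal{P}_N}\langle P,X\rangle_F$, and $M(X)\subseteq\tilde M(X)$. Consequently it suffices to prove two things: (i) almost surely $\argmax_{P\in\mathcal{P}_N}\langle P,X\rangle_F$ is a single permutation matrix $P^\ast$, and (ii) whenever the optimal permutation is unique, $\tilde M(X)=\{P^\ast\}$ as well.

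For (i), note that the optimum over $\mathcal{P}_N$ fails to be unique exactly when there exist distinct $P_1\neq P_2$ in $\mathcal{P}_N$ with $\langle P_1,X\rangle_F=\langle P_2,X\rangle_F$, i.e. $\langle P_1-P_2,X\rangle_F=0$. For each fixed ordered pair $(P_1,P_2)$ with $P_1\neq P_2$, the matrix $P_1-P_2$ is nonzero, so $\{X\in\mathbb{R}^{N\times N}:\langle P_1-P_2,X\rangle_F=0\}$ is a hyperplane in $\mathbb{R}^{N^2}$ and therefore Lebesgue-null. Since the $N^2$ entries of $X$ are i.i.d. draws from a law absolutely continuous with respect to Lebesgue measure on $\mathbb{R}$, the joint law of $X$ is a finite product of such laws, hence absolutely continuous with respect to $N^2$-dimensional Lebesgue measure, and in particular assigns probability zero to this hyperplane. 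Taking a union bound over the at most $N!(N!-1)$ such pairs, almost surely the values $\langle P,X\rangle_F$, $P\in\mathcal{P}_N$, are pairwise distinct, so the $\argmax$ is a singleton $\{P^\ast\}$.

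For (ii), work on the almost-sure event from (i) and take any $Q\in\mathcal{B}_N$ with $Q\neq P^\ast$. By Birkhoff's theorem write $Q=\sum_k\lambda_k P_k$ with $\lambda_k>0$, $\sum_k\lambda_k=1$, $P_k\in\mathcal{P}_N$. Since $Q\neq P^\ast$, at least one $P_k$ with $\lambda_k>0$ satisfies $P_k\neq P^\ast$, hence $\langle P_k,X\rangle_F<\langle P^\ast,X\rangle_F$ by strict suboptimality; therefore $\langle Q,X\rangle_F=\sum_k\lambda_k\langle P_k,X\rangle_F<\langle P^\ast,X\rangle_F$. Thus $P^\ast$ is also the unique maximizer over $\mathcal{B}_N$, giving $\tilde M(X)=M(X)=\{P^\ast\}$ almost surely.

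I expect the only point genuinely requiring care to be the passage from ``each coordinate is absolutely continuous'' to ``the joint law of $X$ charges no hyperplane'', which rests on the fact that a finite product of absolutely continuous probability measures is absolutely continuous with respect to the product Lebesgue measure (a Fubini/Tonelli argument), rather than on anything combinatorial; everything else is bookkeeping with Birkhoff's theorem and the finiteness of $\mathcal{P}_N$.
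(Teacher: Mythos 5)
Your proof is correct and follows essentially the same route as the paper: a union bound over finitely many measure-zero hyperplanes in $\mathbb{R}^{N^2}$ on which two candidate optima tie, combined with Birkhoff's theorem and the fundamental theorem of linear programming. The only difference is bookkeeping: you index the hyperplanes by ordered pairs of permutation matrices and then pass from the vertices to the whole polytope via an explicit convex-combination argument, whereas the paper indexes by faces of $\mathcal{B}_N$ of dimension at least one and concludes the optimal face is a vertex; both are valid, and your version makes the final uniqueness step (which the paper leaves implicit) explicit.
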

\begin{proof}

This is a known result from sensibility analysis on linear programming which we prove for completeness.
Notice first that the problem in (2) is a linear program on a polytope. As such, by the
fundamental theorem of linear program, the optimal solution set must correspond to a face of the polytope. 
Let $\mathcal{F}$ be a face of $\mathcal{B}_N$ of dimension $\geq 1$, and take $P_1,P_2 \in \mathcal{F}$, $P_1 \neq P_2$. 
If $\mathcal{F}$ is
an optimal face for a certain $X_{\mathcal{F}}$, then $X_{\mathcal{F}} \in \{X \ : \  \left< P_1,X\right>_F= \left< P_2,X\right>_F\}$.
Nonetheless, the latter set does \emph{not} have full dimension, and
consequently has measure zero, given our distributional assumption on $X$. Repeating the argument for
every face of dimension $\geq 1$ and taking a union bound we conclude that, almost surely, 
the optimal solution lies on a face of dimension 0, i.e, a vertex.
From here uniqueness follows. 
\end{proof}



 \begin{lemma}

Call $P_{\tau}$ the solution to the problem in equation \ref{eq:entropyreg},  i.e. $P_\tau=P_\tau(X)=S(X/\tau)$. Under the assumptions of Lemma 2,  $P_{\tau}\rightarrow P_0$ when if $\tau\rightarrow 0^+$.
\end{lemma}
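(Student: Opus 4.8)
The plan is to combine a compactness argument with the optimality characterization of $P_\tau$ from Lemma 1 and the uniqueness supplied by Lemma 2. First I would record two elementary facts. The feasible set $\mathcal{B}_N$ is compact, being a closed bounded polytope in $\mathbb{R}^{N\times N}$. And the entropy $h$ is bounded on $\mathcal{B}_N$: since $t\mapsto -t\log t$ (with the convention $0\log 0=0$) maps $[0,1]$ into $[0,1/e]$, we have $0\le h(P)\le N^2/e$ for every $P\in\mathcal{B}_N$. The linear functional $P\mapsto\langle P,X\rangle_F$ is of course continuous.

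Next, fix an arbitrary sequence $\tau_k\to 0^+$. By compactness of $\mathcal{B}_N$, the points $P_{\tau_k}=S(X/\tau_k)$ have a subsequence (not relabelled) converging to some $Q\in\mathcal{B}_N$. By optimality of $P_{\tau_k}$ for the $\tau_k$-regularized problem of equation \eqref{eq:entropyreg}, and testing against the feasible competitor $P_0=M(X)$ (feasible since $\mathcal{P}_N\subseteq\mathcal{B}_N$), we get
\[
\langle P_{\tau_k},X\rangle_F+\tau_k h(P_{\tau_k})\;\ge\;\langle P_0,X\rangle_F+\tau_k h(P_0).
\]
Because $h$ is bounded on $\mathcal{B}_N$, both $\tau_k h(P_{\tau_k})$ and $\tau_k h(P_0)$ tend to $0$; letting $k\to\infty$ and using continuity of $\langle\cdot,X\rangle_F$ gives $\langle Q,X\rangle_F\ge\langle P_0,X\rangle_F$. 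Conversely, by Lemma 2, almost surely $P_0$ is the \emph{unique} maximizer of $\langle\cdot,X\rangle_F$ over $\mathcal{B}_N$, so $\langle Q,X\rangle_F\le\langle P_0,X\rangle_F$, hence equality holds and uniqueness forces $Q=P_0$.

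Finally I would upgrade subsequential convergence to the claimed convergence $P_\tau\to P_0$ as $\tau\to 0^+$ by the standard argument: if it failed, there would exist $\varepsilon>0$ and $\tau_k\to 0^+$ with $\|P_{\tau_k}-P_0\|\ge\varepsilon$; but the previous paragraph extracts from $(P_{\tau_k})$ a further subsequence converging to $P_0$, a contradiction.

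I do not expect a serious obstacle here; the one step that genuinely carries the argument is the use of Lemma 2, since without a \emph{unique} maximizer the limit could only be pinned down to lie somewhere in the optimal face $\tilde{M}(X)$, which would not identify $\lim_{\tau\to 0^+}S(X/\tau)$ with the permutation $M(X)$. The only technical care needed is the convention $0\log 0=0$, so that $h$ is defined and bounded on all of $\mathcal{B}_N$ including its boundary.
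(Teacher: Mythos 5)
Your proof is correct and rests on the same mechanism as the paper's: testing the $\tau$-regularized optimality of $P_\tau$ against the feasible competitor $P_0$, letting the bounded entropy term vanish, and invoking the uniqueness of the linear-program maximizer from Lemma 2 to pin down the limit. The only cosmetic difference is that you extract a convergent subsequence via compactness and identify its limit, whereas the paper first bounds the objective gap $f_0(P_0)-f_0(P_\tau)\le \tau\max_{P\in\mathcal{B}_N}h(P)$ and then argues by contradiction that value convergence forces point convergence; the two are interchangeable.
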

\begin{proof}
\textbf{Proof} Notice that by Lemmas 1 and 2, $P_{\tau}$ is well defined and unique for each $\tau\geq 0$. Moreover, at $\tau=0$,  $P_{0}=M(X)$ is the unique solution of a linear program. Now, let's define $f_\tau(\cdot) = \left< \cdot,X\right>_F+ \tau h(\cdot)$. We observe that $f_0(P_\tau)\rightarrow f_0(P_0)$. Indeed, one has:
\begin{eqnarray}\nonumber f_0(P_0)-f_0(P_{\tau})&=&\left< P_0,X\right>_F -\left< P_\tau,X\right>_F  \\
\nonumber & = & \left< P_0,X\right>_F  - f_\tau(P_\tau)  + \tau h(P_{\tau}) \\
\nonumber & < &  \left< P_0,X\right>_F -f_\tau(P_0) + \tau h(P_{\tau})\\
\nonumber & < &  \tau \left(h(P_{\tau})-h(P_{0})\right)\\
\nonumber & < &  \tau \max_{P\in\mathcal{B}_N} h(P).
 \end{eqnarray} 

From which convergence follows trivially. Moreover, in this case convergence of the values implies the converge of $P_\tau$: suppose $P_\tau$ does not converge to $P_0$. Then, there would exist a certain $\delta$ and sequence $\tau_n\rightarrow 0$ such that $\| P_{\tau_n} -P_0 \|>\delta$. On the other hand, since $P_0$ is the unique maximizer of an LP, there exists $\varepsilon > 0$ such that $f_0(P_0)-f_0(P)>\varepsilon$ whenever $\|P -P_0 \| >\delta$, $P\in \mathcal{B}_N$. This contradicts the convergence of $f_0(P_{\tau_n})$.
\end{proof}
\subsubsection{Proof of Theorem 1}
The first statement is Lemma 1. Convergence (equation \ref{eq:convergence}) is a direct consequence of Lemma 3,  after noticing $P_\tau=S(X/\tau)$ and $P_0=M(X)$. We note that an alternative approach for the limiting argument is presented in \cite{Cominetti1994}.

\subsection{Relation to softmax}
\label{sub:relation}
Finally, we notice that all of the above results can be understood as a generalization of the well-known approximation result $\argmax_i x_i =\lim_{\tau \rightarrow 0^+} softmax(x/\tau)$. To see this, treat a category as a one-hot vector. Then, one has
\begin{equation}\argmax_i x_i = \argmax_{e\in\mathcal{S}_N} \langle e,x\rangle,\end{equation}
where $\mathcal{S}_n$ is the probability simplex, the convex hull of the one-hot vectors (denoted $\mathcal{H}_n$). Again, by the fundamental theorem of linear algebra, the following holds:
\begin{equation}\argmax_i x_i = \argmax_{e\in\mathcal{H}_N} \langle e,x\rangle.\end{equation}
On the other hand, by a similar (but simpler) argument than of the proof of theorem 4 one can easily show that 
\begin{equation}softmax(x/\tau)\equiv\frac{\exp(x/\tau)}{\sum_{i=1}\exp(x_i/\tau)}= \argmax_{e\in\mathcal{S}_n} \langle e,x\rangle+\tau h(e),\end{equation}
where the entropy $h(\cdot)$ is not defined as $h(e)=-\sum_{i=1}^n e_i\log(e_i)$
\begin{figure}[t!]
 \includegraphics[width=1\linewidth]{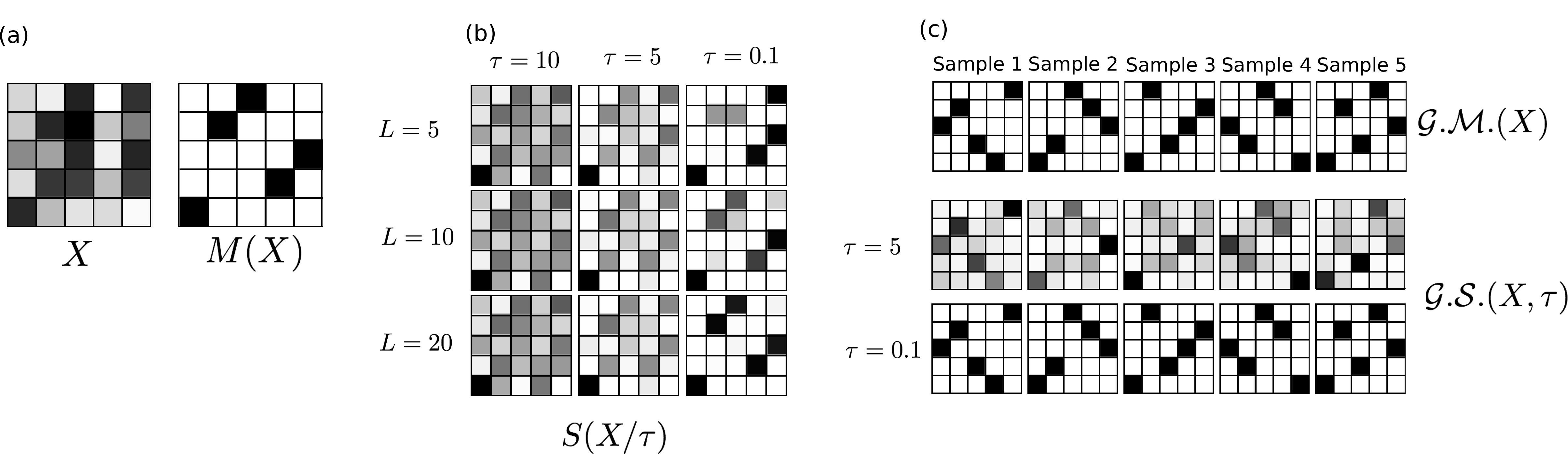}
 \caption{Illustrating the Matching and Sinkhorn operators, and the Gumbel-Matching and Gumbel-Sinkhorn distributions. Each 5x5 grid represents a matrix, with the shading indicating cell values (a) Matching operator $M(X)$ applied to a parameter matrix $X$. (b) Sinkhorn Operator $S(X/\tau)$ approximating $M(X)$ for different temperature $\tau$ and number of Sinkhorn iterations, $L$. (c). First row: samples from the Matching Sinkhorn distribution. Second and third rows: samples from the Gumbel-Sinkhorn distribution at two temperatures. At low temperature, both distributions are indistinguishable. }
 \label{fig:1}
\end{figure}
\subsection{Illustrating theorem 1}
\label{sub:illustration}

\pagebreak
\section{Supplemental Methods}
\label{sec:expdetails}
\subsection{Experimental protocols}
All experiments were run on a cluster using Tensorflow \cite{Abadi2016}, using several GPU (Tesla K20, K40, K80 and P100) in parallel to enable an efficient exploration of the hyperparameter space: temperature, learning rate, and neural network parameters (dimensions). 

In all cases, we used $L=20$ Sinkhorn Operator Iterations, and a 10x10 batch size: for each sample in the batch we used Gumbel perturbations to generate 10 different reconstructions.

For evaluation, we used the Hungarian Algorithm \cite{Munkres1957} to compute $M(X)$ required to infer the predicted matching.

Finally, experiments of section \ref{sub:celegans} were done consistent with model specifications stated in \cite{Linderman2017}

\subsection{Number of parameters on Sinkhorn Networks}
\label{sub:network}
In the simplest network, the one that sorts number, the number of parameters is given by $n_u+N\times n_u$: Indeed, each number is connected with the hidden layer with $n_u$ (here, 32) units. This layer connects with another layer with $N$ units, representing a row of $g(\tilde{X},\theta)$.

For images, the first layer is a convolution, composed by $n_f$ convolutional filters of receptive field size $K_s$ with $n_c$ channels (one or three) followed by a ReLU + max-pooling (with stride $s$) operations.  Then, the number of parameters in the first layer is given by $K_s^2  \times n_c \times n_f+n_f$.
The second layers connects the output of a convolution, i.e., the stacked convolved $l\times l$ images by each of the filters (after max-pooling) and $p^2$ units, where $p$ is the number of pieces each side was divided by. Therefore, the number of parameters is given by  $l^2/(p^2 s^2) \times n_f \times p^2 = l^2/s^2 \times n_f$, up to rounding and padding subtleties. Then, the total number of parameters is $l^2/s^2\times n_f+K_s^2  \times n_c \times n_f+n_f$. For the 3x3 puzzle on Imagenet, $l=256,p=3,n_c=3$ and the optimal network was such that $n_f=64,s=2,K_s=5$. Then, it had 1,053,440 parameters.

Finally, for arbitrary assembly experiments, as one includes additional fully connected second layers, the total number of parameters is $n_l \times l^2/s^2\times n_f+K_s^2  \times n_c \times n_f+n_f$, where $n_l$ is the number of labels (here, $n_l=10$).

\subsection{Inference with the implicit Gumbel-Sinkhorn distribution}
\label{sub:implicit}
Here we show how to compute $\infdiv{(X+\varepsilon)/\tau}{\varepsilon/\tau_{prior}}$, as defined in \ref{sub:vi}.
We first notice that the density of the variable $h=(a+g)/b$, where $g$ has a Gumbel distribution and $a,b$ are constants is given by:

\begin{equation}
\log p_h(z)= \log b - (bz -a +\exp\left(a-bz)\right).
\end{equation}
Therefore, the log density ratio $LR(z)$ between each component of $h_1=(x_{i,j}+\varepsilon_{i,j})/\tau$ and $h_2=\varepsilon_{i,j}/{\tau_{prior}}$ is (suppressing indexing for simplicity)
\begin{align*}
LR(z)=& \log p_{h_1}(z)/\log p_{h_2}(z) \\
 =&  \log \tau - (\tau z - x +\exp\left(x-z \tau\right)) -\log \tau_{prior} + (\tau_{prior} z +\exp\left(-z\tau_{prior}\right) ).
\end{align*}
We need to take expectations with respect to the distribution of $h_1$. To compute this expectation, we first express the above ratio in terms of $\varepsilon$ \begin{align*}
LR(\varepsilon)= 
 &  \log (\tau/\tau_{prior}) - (\varepsilon +\exp\left(-\varepsilon \right) -(\varepsilon +x)  \tau_{prior}/\tau  -\exp\left(-(\varepsilon + x)\tau_{prior}/\tau\right) )) \\
\end{align*}
Now we appeal to the \emph{law of the unconscious statistician}, and take the expectation with respect to $\varepsilon$. Using the identities \begin{itemize}
\item $E(\varepsilon)=\gamma\approx 0.5772$ (the Euler-Mascheroni constant)
\item Moment generating function $E(\exp(t\varepsilon))=\Gamma(1-t)$; implying $E(\exp(-\varepsilon))=1$ and $E(\exp\left(-\tau_{prior}/\tau\varepsilon\right)) =\Gamma(1+\tau_{prior}/\tau)$)
\end{itemize}  
we have:

\begin{align*}
E_{h_1}\left(LR(z)\right)=&E_{\varepsilon}\left(LR(\varepsilon)\right)  \\
=  &\log (\tau/\tau_{prior}) - (\gamma(1-\tau_{prior}/\tau)+1 -x \tau_{prior}/\tau  -\exp\left(-x\tau_{prior}/\tau\right)\Gamma(1+\tau_{prior}/\tau )).&
\end{align*}

From this, it easily follows (adding all the $N^2$ components) that
\begin{align*}\nonumber \infdiv{(X+\varepsilon)/\tau}{\varepsilon/\tau_{prior}}=& \sum_{i,j} E_{g_1}\left(LR(z_{i,j})\right)\\
 = & N^2 \left(\log (\tau/\tau_{prior}) - 1 + \gamma(\tau_{prior}/\tau-1)\right) +S_1 +\Gamma(1+\tau_{prior}/\tau ) S_2, \end{align*}
where $S_1=\tau_{prior}/\tau \sum_{i,j}x_{i,j}$ and $S_2=\sum_{i,j}\exp\left(-x_{i,j}\tau_{prior}/\tau\right)$.

\section{Supplemental Results}
\label{sec:suppresults}
\subsection{Puzzles}
\label{sub:puzzlessupp}
In table \ref{table:suppceleba} we provide further performance measures for the Jigsaw puzzle task on Celeba, for extreme hyper-parameter values: small temperature, large temperature, and a single Sinkhorn iteration  These are worse than the ones in table \ref{table:puzzles}, although surprisingly, one Sinkhorn iteration already provides reasonable performance, as long temperature is chosen in an appropriate range.

\begin{table}[h]
  \caption{Jigsaw puzzle results for different extreme hyper-parameter values}
  \label{table:suppceleba}
  \centering
  \begin{tabular}{llllllllllllllllllll}
    \toprule
     \multicolumn{4}{r}{$\tau=0.01$} &  \multicolumn{4}{c}{$\tau=100$} &  \multicolumn{4}{c}{$L=1$} \\
  \cmidrule(lr){2-5} \cmidrule(lr){6-9} \cmidrule(lr){10-14}
    &  2x2 & 3x3 & 4x4 & 5x5 & 2x2 & 3x3 & 4x4 & 5x5 & .2x2 & 3x3 & 4x4 & 5x5 \\
    \midrule
    Prop. wrong       & .06 & .08 & .23 & .36 & .03 & .1 & .28 & .5 &  .0 & .03 & .13 & .28 &\\
    Prop. any wrong   &  .1 & .22 &  .36& .9 &  .04 & .23 &  .67& .97 & .0 & .08 &  .42& .82\\
    Kendall tau     &  .9 & .89 & .74 & .62 &  .97 & .88 & .7 & .47& 1.0 & .96 & .86 & .72  \\
    $l1$     & .03& .04 & .1  & .14 & .01& .04 & .11  & .19 &  .0& .01 & .05  & .11\\
    $l2$     &  .16& .18 & .28 &  .34 &  .11& .19 & .3 &  .38 &  .0& .11 & .21 &  .3 & \\
    \bottomrule
  \end{tabular}
\end{table}
\subsection{Transformations into arbitrary digits}
\label{sub:asssup}
In table \ref{table:assembly} we show performance of a 2-layer CNN in detecting transformed digits as the ones they are intended to be. From this we see the most troublesome transformation was to one, as this network most of the times categorized it as a different number.
\begin{table}[h]
 \begin{tabular}{@{} cl*{10}c @{}}
   
     \multicolumn{11}{c}{Becomes} \\
    \cmidrule(lr){3-12} 
   & & 0 & 1  & 2 & 3  & 4 & 5 & 6 & 7 & 8 & 9 \\
    \cmidrule{2-12} 
    &0     & 1. & .0  &1. &1. & 1.  & 1. & 1. & 1. & 1. & 1. \\
    &1   &  .91 & 1.       & .97  &.99&  .99 & 1. & 1. & .56 & .75 & .2 \\
    &2     & 1.       & .0 & 1. &1.&  1. & 1. & 1. & .70 & 1. & 1.\\
    &3     & .04      & .0 &1. & 1.&  1.& 1. & .96 & 1. & 1. & .96\\
    &4     & 1.       & .46 & 1. &1. & 1.&  1.& 1. & 1. &  .68 & .36\\
    &5     & 1.       & .0 & 1. &1. & .63&  1.& 1. & 1. &  1. & 1.\\
         \rot{\rlap{~Actual digit}}
    &6     & .3       & .01 & 1. &1. & 1.&  1.& .65 & 1. &  .65 & 1.\\
    &7     & .0       & .73 & .27 &.46 & 1.&  1.& 1. & 1. &  1. & .72\\
    &8     & 1.       & .07 & 1. &1. & 1.&  1.& 1. & .07 &  1. & 1. \\
    &9     & 1.       & .33 & 1. & 1. & 1.&  1.& 1. & 1. &  1. & .66\\
    
    \cmidrule{2-12} 
  \end{tabular}
  \caption{Accuracies of two-layer convolutional neural network in identifying transformed digits}
  \label{table:assembly}
  \end{table}
Also, in figure \ref{fig:afig1} we show transformations, showing that to reconstruct to arbitrary digits it is not required that the original ones have an actual digit-like structure, but they can be only pieces of `strokes' or `dust'. 

  \begin{figure}[h]
 \includegraphics[width=1.0\linewidth]{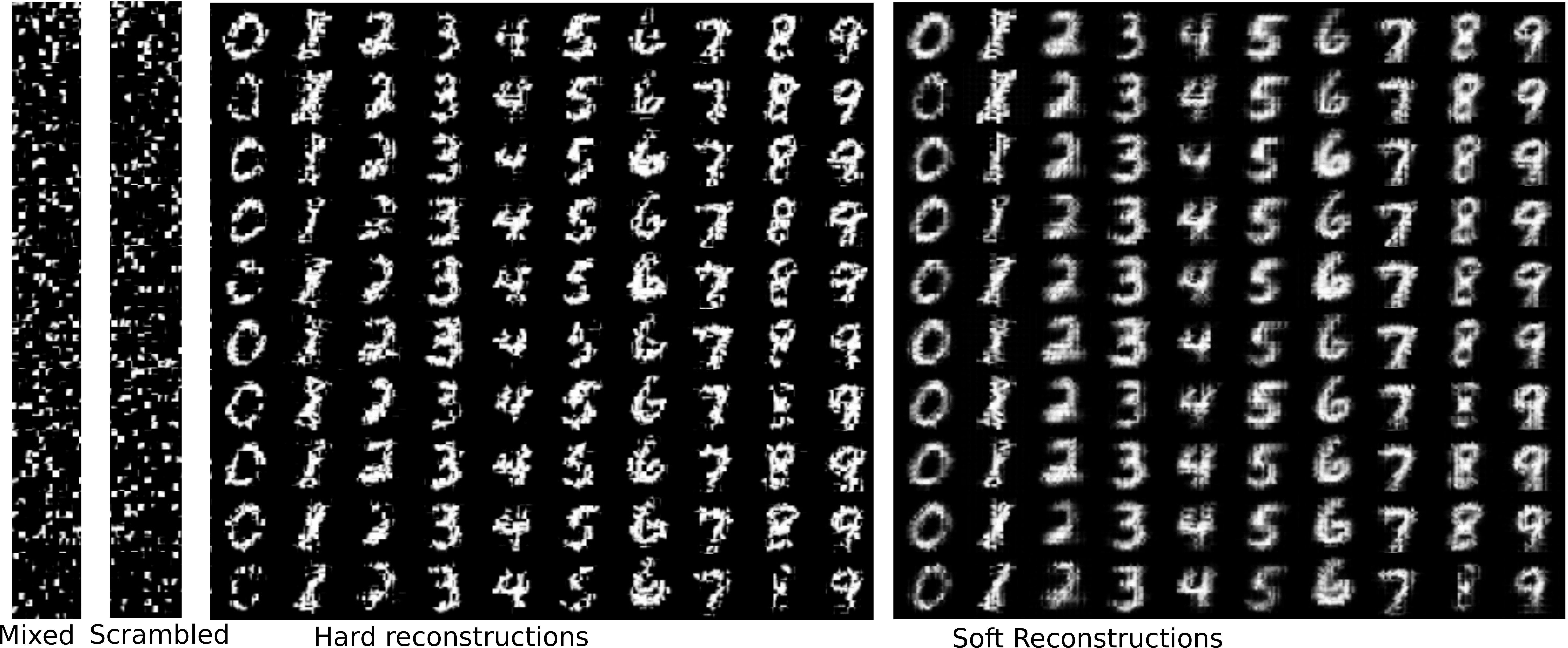}
 \caption{First column: samples from dataset created by mixing all pieces of digits, and then re-assembling them into  `digits'. Second column: random permutations of first column. Third column: hard reconstructions using $M(X)$. Fourth column: soft reconstructions using $S(X/\tau)$ and $\tau=1$. Metaphorically, one is able to reconstruct pieces out of `dust'.}
  \label{fig:afig1}
 \end{figure}

 \subsection{Results on categorial VAE in MNIST}
In general, for  arbitrary random variables $Z_1,Z_2$ and a function $g$, one has \begin{equation}\infdiv{Z_1}{Z_2} \geq \infdiv{g(Z_1)}{g(Z_2)}.\end{equation}
We prove this in the discrete case, for simplicity: call $q(z)$ and $p(z)$ the densities of $Z_1,Z_2$, and call $y=g(z)$. This induces two joint distributions, $p(z,y)$ and $q(z,y)$. Now, define $${\infdiv{q(z|y)}{p(z|y)}=\sum_{y,z} \left(q(z,y) \log q(z|y)-\log p(z|y)\right)}.$$ Under this definition, one can verify that
\begin{align*}\infdiv{q(z,y)}{p(z,y)} = & \infdiv{q(z)}{p(z)} +\infdiv{q(y|z)}{p(y|z)}\\  = & \infdiv{q(y)}{p(y)}+\infdiv{q(z|y)}{p(z|y)}.\end{align*}
But $\infdiv{(q(y|z)}{p(y|z)}=0$, as $y$ is a deterministic function of $z$. Therefore, $\infdiv{(q(z)}{p(z)} =\infdiv{q(y)}{p(y)}+\infdiv{q(z|y)}{p(z|y)}$, and since the second term is positive (a KL divergence) we conclude $\infdiv{q(z)}{p(z)} \geq \infdiv{q(y)}{p(y)}$.

This implies a lower (or less tight) ELBO if using $Z_1,Z_2$ instead of $g(Z_1),g(Z_2)$. However, we note that in the categorical case this has a minimal impact in performance. Indeed, we replicated the density estimation on MNIST task described in \cite{Jang2016,Maddison2016}, and as alternative method we considered the concrete distribution, but using as stochastic node ${(\varepsilon+x)/\tau}$  (with prior $\varepsilon/\tau_{prior}$ instead of two concrete distributions. In other words, for us ${g(x)=\mathrm{softmax}_\tau(x)}$ and $Z_1=(\varepsilon+x)/\tau, Z_2=(\varepsilon )/\tau_{prior}$ (in law). Results are shown in Table \ref{table:vaecategorical}. We first see that Concrete distribution does worse than Gumbel-Softmax, which we attribute to a sub-optimal parameter search. However, we see that working in the Gumbel space has little impact on $\log p(x)$: the difference was smaller than $.5$ nats.

\label{sub:vaecategorical}
\begin{table}[h]
  \centering
  \begin{tabular}{ll}
    \textbf{Method} & $- \log p(x)$ \\
    \hline
    Gumbel-Softmax    & 106.7 \\
    Concrete  &  111.5\\
    Concrete (Gumbel space) &  111.9 \\
    \bottomrule
  \end{tabular}
    \caption{Summary of results in VAE}
   \label{table:vaecategorical}
\end{table}

\begin{table}[h!]
   \centering
  \begin{tabular}{lllllllllll}Mean number of candidates
    & \multicolumn{2}{c}{10} & \multicolumn{2}{c}{30} &   \multicolumn{2}{c}{45} & \multicolumn{2}{c}{60} \\
    \cmidrule(lr){2-3} \cmidrule(lr){4-5} \cmidrule(lr){6-7} \cmidrule(lr){8-9}
    Difficulty & 1 worm & 4 worms & 1 Worm & 4 worms & 1 worm & 4 worms & 1 worms & 4 worms \\
    \midrule
    MCMC   & .34 & .65  &.18 &.28& .14 & .17 & .13 & .16 \\
    \citep{Linderman2017}  &   .77  & .93 &  .33 &\textbf{.7} &  .18 & .48 & .17 & .37 \\
    Gumbel-Sinkhorn    & \textbf{.79} & \textbf{.94} & \textbf{.4} & .69 & \textbf{.25}&  \textbf{.51} & \textbf{.21} & \textbf{.44}\\
      \makecell[l]{Gumbel-Sinkhorn \\ (no regularization)} 
      &0.77 & .92 & \textbf{.4} & .64 &  .25 & .44 & .21 & .39 \\
    \bottomrule
  \end{tabular}
   \caption{Accuracy in the C.elegans neural identification problem, for varying mean number of candidate neurons (10, 30, 45, 60) and number of worms (1 and 4).}
   \label{table:celeganssup}
\end{table}

\subsection{Supplementary results on C.elegans}
\label{sub:celeganssup}
Finally, in Table \ref{table:celeganssup} we show additional results for the C.elegans experiment. The setting is the same as in  Figure 4(a) in \cite{Linderman2017}. Likewise, Table \ref{table:celegans} correspond to the setting of Figure 4(b) in \cite{Linderman2017}.
\section{Supplementary discussion}
\subsection{Sinkhorn operator for approximate marginal inference}
\label{sec:marginal}
A second connection between the distribution in \eqref{eq:expfamily} (and therefore, the Matching Gumbel distribution) and the Sinkhorn operator arises as a consequence of Theorem 1. This relates to the estimation of the marginals $E_\theta(P_{i,j})$, known to be a \#P hard problem.  A well known result \citep{Globerson2007, Wainwright2008}, consequence of Fenchel (conjugate) duality \citep{Rockafellar1970} applied to exponential families, links this problem to optimization in the following way: lets denote by $\mathcal{M}$ the marginal polytope, the convex hull of the set of realizable sufficient statistics, that here coincides with $\mathcal{B}_n$. Also, lets call $\mathcal{H}(\mu)$ the entropy of \eqref{eq:expfamily} for the parameter $\theta(\mu)$ such that $\mu=E_{\theta(\mu)}(P)$. Then,
\begin{equation} \label{eq:entropydual} E_\theta(P) =\arg\max_{\mu\in\mathcal{M}}\langle \theta, \mu \rangle_F + \mathcal{H}(\mu).
\end{equation}

Notice the only difference between the optimization problems in \eqref{eq:entropydual} and \eqref{eq:entropyreg} is the entropy term, after identifying $X$ with $\theta$. Therefore, one may understand the Sinkhorn operator as providing approximations for the partition function and the marginals, which will be accurate insofar as $h(\mu)$ is a good approximation for $\mathcal{H}(\mu)$. In this way, one can understand $S(X)$ as an approximation for $E_\theta(P)$, that may complement more classical ones, as the Bethe and Kituchani's approximations for $\mathcal{H}(\mu)$, and the corresponding approximate inference algorithms that they give rise to \citep{Yedidia2001,Vilnis2015}.

\subsection{Summary of extensions}
\label{sec:summary}
\begin{table}[h]
\caption{Analogies between permutation and categories}
\label{table:summary}

\resizebox{1.1\columnwidth}{!}{%
   \centering
   \hskip-1.0cm
   {\setlength{\extrarowheight}{10pt}%

  \begin{tabular}{lll}
    \multicolumn{1}{c}{} & \multicolumn{1}{c}{\textbf{\large{Categories}}} & \multicolumn{1}{c}{\textbf{\large{Permutations}}} \\
\cmidrule(lr){1-3}

    \textbf{Polytope}   &Probability simplex $\mathcal{S}$ & Birkhoff polytope $\mathcal{B_N}$  \\
    \textbf{Linear program}  & $\argmax x_i=\argmax_{s\in\mathcal{S}}\langle x,s\rangle$ & $M(X)=\argmax_{P\in\mathcal{B}}\left< P,X\right>_F$\\
        \textbf{Approximation} & $\argmax_i x_i =  \lim_{\tau\rightarrow 0^+}  \text{softmax}(x/\tau)$ & $M(X) = \lim_{\tau\rightarrow 0^+} S(X/\tau)$ \\
        \bottomrule 

  \textbf{Entropy} & $h(s)=\sum_i -s_i \log s_i $ & $h(P)=\sum_{i,j}-P_{i,j}\log\left(P_{i,j}\right)$\\

       \textbf{\makecell{Entropy regularized \\ linear program}}  & $ \text{softmax}(x/\tau)=\argmax_{s\in\mathcal{S}}\langle x,s\rangle +\tau h(s)$ &  $S(X/\tau)=\argmax_{P\in\mathcal{B}}\left< P,X\right>_F
    +\tau h(P)$\\ \bottomrule

     \textbf{Reparameterization} &    \makecell{\textbf{Gumbel-max trick} \\ $\argmax_i (x_i+\epsilon_i)$} & \makecell{\textbf{Gumbel-Matching }$\mathcal{G}{M}(X)$ \\$M(X+\epsilon)$}\\ 
  
    \textbf{\makecell{Continuous \\ approximation}} & \makecell{\textbf{Concrete} \\ $\text{softmax}((x+\epsilon)/\tau)$}  & \makecell{\textbf{Gumbel-Sinkhorn }$\mathcal{G}{S}(X,\tau)$ \\$S((X+\epsilon)/\tau)$} \\ \bottomrule

  \end{tabular}}
}
\end{table}

\end{document}